
\documentclass{article}

\usepackage{microtype}
\usepackage{graphicx}
\usepackage{subfigure}
\usepackage{booktabs} 

\usepackage{hyperref}
\usepackage{colortbl}
\usepackage{xcolor}


\usepackage[accepted]{icml2025}

\usepackage{amsmath}
\usepackage{amssymb}
\usepackage{mathtools}
\usepackage{amsthm}

\usepackage{multirow}
\usepackage{listings}
\usepackage{makecell}
\usepackage{bibentry}
\usepackage[normalem]{ulem}
\usepackage{arydshln}

\usepackage[capitalize,noabbrev]{cleveref}

\makeatletter
\def\adl@drawiv#1#2#3{%
        \hskip.0\tabcolsep
        \xleaders#3{#2.5\@tempdimb #1{1}#2.5\@tempdimb}%
                #2\z@ plus1fil minus1fil\relax
        \hskip.0\tabcolsep}
\newcommand{\cdashlinelr}[1]{%
  \noalign{\vskip\aboverulesep
           \global\let\@dashdrawstore\adl@draw
           \global\let\adl@draw\adl@drawiv}
  \cdashline{#1}
  \noalign{\global\let\adl@draw\@dashdrawstore
           \vskip\belowrulesep}}
\makeatother


\theoremstyle{plain}
\newtheorem{theorem}{Theorem}[section]

\theoremstyle{definition}
\newtheorem{definition}[theorem]{Definition}
\newtheorem{assumption}[theorem]{Assumption}
\theoremstyle{remark}

\usepackage[textsize=tiny]{todonotes}

\icmltitlerunning{Ultra Lowrate Image Compression with Semantic Residual Coding and Compression-aware Diffusion}

\begin{document}

\twocolumn[
\icmltitle{Ultra Lowrate Image Compression with \\Semantic Residual Coding and Compression-aware Diffusion}



\icmlsetsymbol{equal}{*}

\begin{icmlauthorlist}
\icmlauthor{Anle Ke}{nju}
\icmlauthor{Xu Zhang}{nju}
\icmlauthor{Tong Chen}{nju}
\icmlauthor{Ming Lu}{nju}
\icmlauthor{Chao Zhou}{kwai}
\icmlauthor{Jiawen Gu}{kwai}
\icmlauthor{Zhan Ma}{nju}
\end{icmlauthorlist}

\icmlaffiliation{nju}{School of Electronic Science and Engineering, Nanjing University}
\icmlaffiliation{kwai}{Kuaishou Technology}

\icmlcorrespondingauthor{Tong Chen}{chentong@nju.edu.cn}

\icmlkeywords{Image Compression, Diffusion Models, Semantic Residual Coding}

\vskip 0.3in
]

\printAffiliationsAndNotice{}

\begin{abstract}
Existing multimodal large model-based image compression frameworks often rely on a fragmented integration of semantic retrieval, latent compression, and generative models, resulting in suboptimal performance in both reconstruction fidelity and coding efficiency. To address these challenges, we propose a residual-guided ultra lowrate image compression named \textbf{ResULIC}, which incorporates residual signals into both semantic retrieval and the diffusion-based generation process. Specifically, we introduce \textbf{Semantic Residual Coding (SRC)} to capture the semantic disparity between the original image and its compressed latent representation. A perceptual fidelity optimizer is further applied for superior reconstruction quality. Additionally, we present the \textbf{Compression-aware Diffusion Model (CDM)}, which establishes an optimal alignment between bitrates and diffusion time steps, improving compression-reconstruction synergy. Extensive experiments demonstrate the effectiveness of ResULIC, achieving superior objective and subjective performance compared to state-of-the-art diffusion-based methods with -80.7\%, -66.3\% BD-rate saving in terms of LPIPS and FID.
Project page is available at \url{https://njuvision.github.io/ResULIC/}.
\vspace{-8pt}
\end{abstract}

\section{Introduction}
\label{submission}

\begin{figure}[t]
\begin{center}
\includegraphics[width=0.48\textwidth]{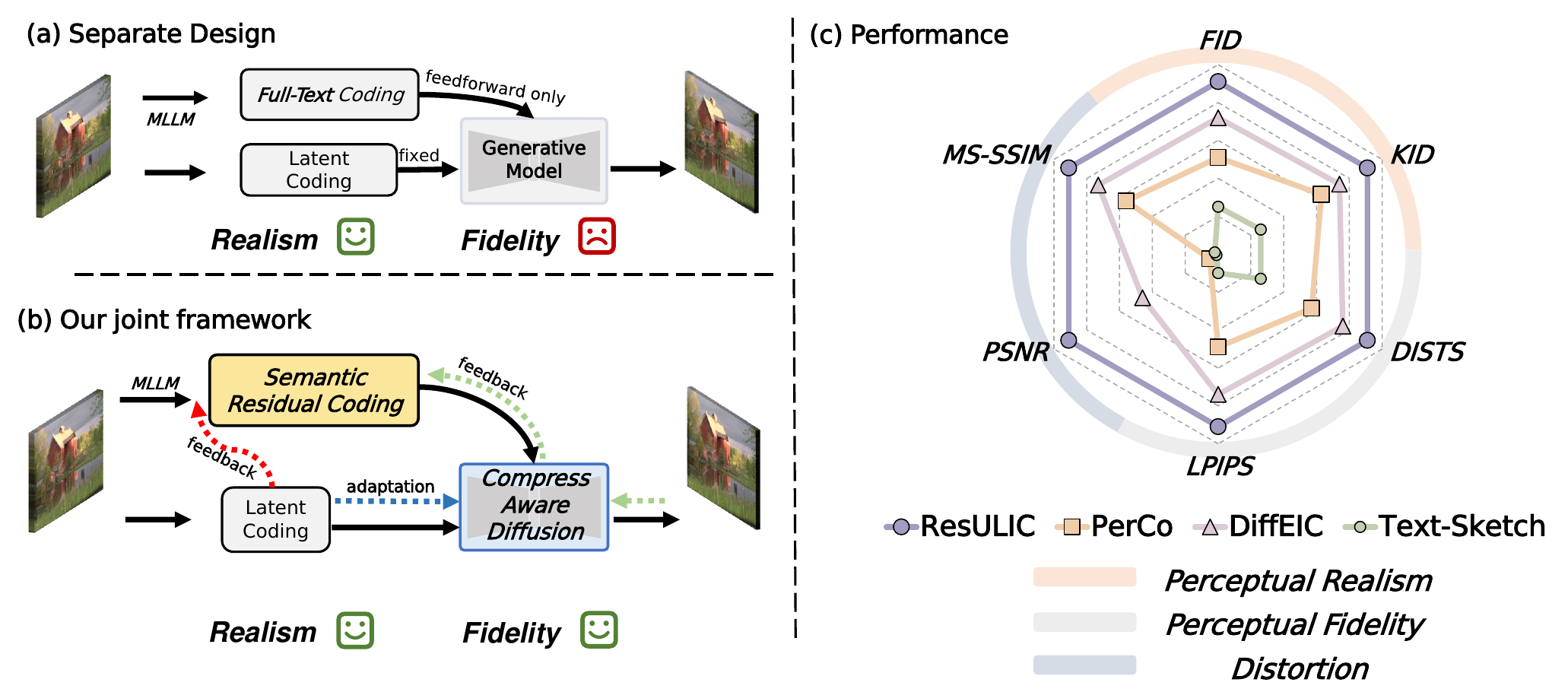}
\end{center}
\vspace{-16pt}
\caption{(a) The separate design for existing frameworks. The MLLM indicates the Multimodal Large Vision-Language Model. The latent coding represents the compression of features within the latent space. (b) Our pipeline with proposed Semantic Residual Coding and Compression-aware Diffusion Model. (c) Comparison with existing diffusion-based ultra lowrate image compression methods on CLIC2020 dataset.}
\vspace{-6pt}
\label{fig:cover}
\end{figure}

\begin{figure*}[t]\scriptsize
\begin{center}
\includegraphics[width=0.98\textwidth]{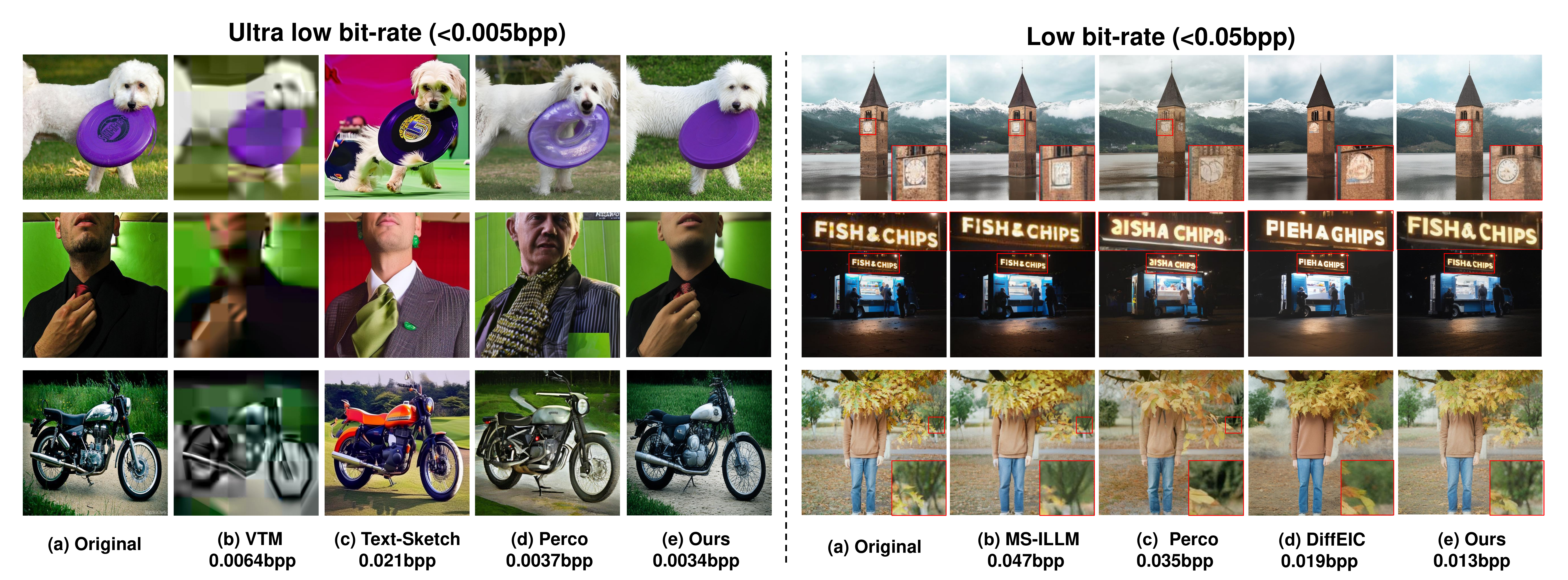}
\vspace{-13pt}
\caption{Visual comparison at extremely low bitrates. The bitrate is averaged over samples for each tested method. The major structure on the left ($<$ 0.005 bpp) and the details of the text, hand, and clock on the right ($<$ 0.05 bpp) are better preserved.}
\vspace{-12pt}
\end{center}
\label{fig:compare}
\end{figure*}

In recent years, learning-based image compression techniques~\citep{Hyperprior, minnen2018joint, chen2021end,lu2022transformer, duan2023qarv} have gained considerable attention and shown superior performance compared to traditional codecs, such as JPEG2000~\cite{JPEG2000} and VVC Intra Profile~\citep{VVC}, in both objective metrics and subjective evaluations.
However, at lower bitrates, these methods often struggle with excessively smooth textures or the loss of fine details and structural information.

In response to these challenges, extensive research has been conducted on optimizing perceptual quality. Among them, 
Generative Adversarial Network~\cite{GAN} (GAN)-based approaches~\cite{HiFiC, MS-ILLM} demonstrated competitive performance with visually-pleasing reconstruction.
Some works have further attempted to enhance the visual reconstruction quality by incorporating text guidance. Within the conventional encoder-decoder framework, they have integrated semantic information into either the encoding process~\cite{taco} or both the encoding and decoding processes~\cite{TGIC} to improve the perceptual quality of reconstruction. 

More recently, the advent of diffusion models~\citep{ho2020denoising,LDM} has provided a turning point for this predicament. Existing diffusion model-based image compression methods~\cite{PerCo, Text+Sketch, diffeic} have shown more impressive results than GAN-based methods, achieving high visual quality reconstruction at extremely low bitrates ($<$ 0.01 bits per pixel (bpp)). However, the reconstruction reliability (indicating consistency and fidelity) remains unsatisfactory, with significant differences from the original inputs. 


From a compression standpoint, the objective is to extract compact and accurate information with minimal bitrate consumption, while ensuring the reconstructed images with balanced perceptual realism and fidelity. However, two key challenges hinder the effective integration of generative models into image compression tasks:
\begin{itemize}
\vspace{-8pt}
    \item \textit{\textbf{Multimodal Semantic Integration}:
The effective integration of multimodal semantic information, minimizing redundancy while ensuring high perceptual fidelity at extremely low bitrates.}
    \item \textit{\textbf{Compression-Generation Alignment}: Modeling the compression ratio with the noise scale in the diffusion process, enabling efficient and consistent reconstructions across varying compression levels.} 
    \vspace{-8pt}
\end{itemize}

For \textbf{\textit{challenge 1}}, 
as shown in Figure~\ref{fig:cover}(a), 
existing Multimodal Large Language Models (MLLM)-based methods primarily focus on simply integrating information from both texts and other content (such as sketches, color maps, or structures) to reconstruct images, overlooking the semantic information already embedded in both sources, leading to semantic redundancy.
To address this problem, we propose to implement a semantic residual coding module as in Figure~\ref{fig:cover}(b) into our multimodal image compression framework, aiming to achieve overall minimal bitrate consumption.
Besides, to optimize the perceptual fidelity, we propose a differential prompt optimization strategy to find the optimal text prompts for improving the reconstruction consistency. 

For \textbf{\textit{challenge 2}}, 
the degradation introduced by compression and the diffusion noising process share a common characteristic: as noise increases (or the compression ratio becomes higher), less information is preserved in the degraded image. Consequently, the compression ratio aligns inherently with the diffusion time steps. 
In this context, we aim to model this correlation. As illustrated in Figure~\ref{fig:cover}(b), we incorporate the latent residual into the diffusion process and propose a Compression-aware Diffusion Process, which effectively enhances reconstruction fidelity while significantly improving decoding efficiency.

Experimental results show that our model achieves both objectively and subjectively pleasing results compared to current state-of-the-art approaches, with significantly reduced decoding latency. The contributions of our work can be summarized as follows:

\begin{itemize}
  \setlength\itemsep{0em}
  \vspace{-5pt}
    \item We propose the Semantic Residual Coding (SRC), implementing multimodal large models as the residual extractor to remove the redundant semantic information between the original image and the compressed latent features, achieving joint bitrate reduction with efficient token index coding. A differential prompt optimization method can be further applied to efficiently search for text prompts with improved fidelity.
    \item We propose the Compression-aware Diffusion Model (CDM), which modeling the relationship between diffusion time steps and the bitrate of compressed images, significantly enhancing the reconstruction fidelity while reducing decoding latency.
    \item Based on the above key modules, a high-fidelity Residual-guided Ultra Lowrate Image Compression framework named \textit{ResULIC} is proposed, achieving impressive visual quality at ultra-low bitrates, outperforming existing SOTA method PerCo by -80.7\% and -66.3\% BD-rate saving in terms of LPIPS and FID.
    \vspace{-8pt}
\end{itemize}

\begin{figure*}[t]\scriptsize
\begin{center}
\includegraphics[width=0.95\textwidth]{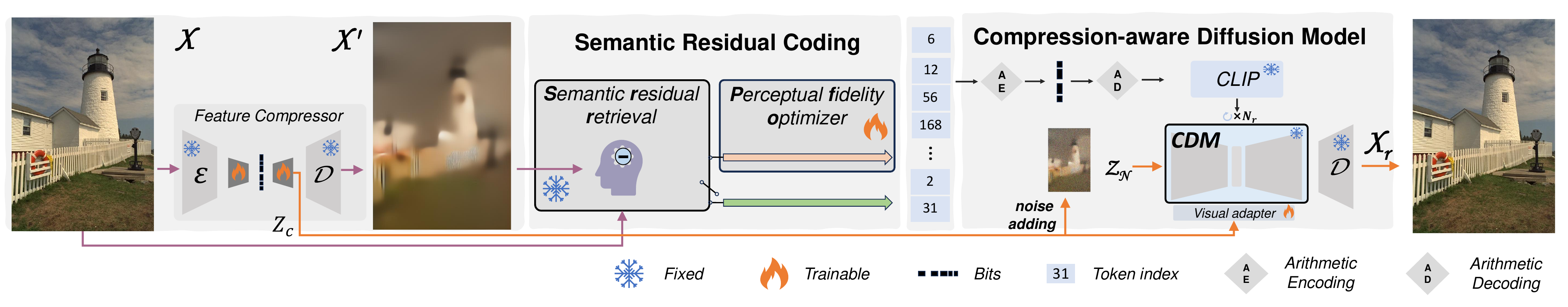}
\vspace{-14pt}
\caption{\textbf{ResULIC Overview}: (1) The feature compressor transforms the original image \( x \) into the compressed latent feature $z_c$. (2) The \textbf{Semantic residual retrieval (Srr)} generates optimized captions by analyzing both the decoded image 
$x'$ and the original $x$, with the plugin play module \textbf{Perceptual fidelity optimizer (Pfo)} to further improve reconstruction quality. (3) Text tokens are embedded into $c$ and combined with $z_c$ as conditions for the \textbf{Compression-aware Diffusion Model (CDM)} to generate the final image \( x_r \). }
\vspace{-12pt}
\label{fig:arch}
\end{center}
\end{figure*}

\section{Background}
Recent advancements in learned image compression, such as \citet{balle2017end} and subsequent works~\citep{Hyperprior, minnen2018joint, chen2021end, ELIC, lu2022transformer, duan2023qarv}, have showcased the potential of neural networks and advanced features like hyperpriors and context models to significantly improve rate-distortion performance. Meanwhile, GAN-based approaches~\citep{HiFiC, EGIC, MS-ILLM} have focused on optimizing the reconstruction fidelity, especially for low-bitrate scenarios. Most Recently, compression algorithms leveraging large-scale pre-trained generative models, such as Diffusion, have emerged, demonstrating highly competitive performance. 

\textbf{Diffusion-based Generative Models.}
Diffusion models, inspired by non-equilibrium statistical physics~\citep{DM}, have achieved great success in visual tasks through advancements like DDPM~\citep{ho2020denoising} and LDM~\citep{LDM}, while strategies such as adding trainable networks~\citep{ControlNet,T2I-Adapter, zavadski2025controlnet} enable efficient fine-tuning of pretrained models to reduce resource requirements.

While diffusion models and their fine-tuning strategies have shown significant promise, an additional challenge lies in ensuring the reliability of generated content, particularly for tasks like compression. Previous works, such as textual inversion~\citep{textual_inversion} and Dreambooth~\citep{dreambooth}, use \textit{soft prompts} (continuous, learnable vectors optimized for specific tasks) optimized for visual similarity, but these are not suitable for compression due to their high-dimensional nature. Other approaches, like PEZ~\citep{pez}, utilize discrete prompt optimization by projecting learnable continuous embeddings into the space of discrete embedding vectors to perform prompt optimization. The final optimized \textit{hard prompts} remain in textual form and are capable of producing highly consistent images.

In recent years, there has been a notable increase in the use of diffusion models for image restoration. Prominent works such as ResShift~\citep{ResShift} and RDDM~\citep{RDDM} enhance the diffusion process by incorporating the residuals between the original and degraded data. This approach significantly enhances reconstruction quality. Additionally, some works like PASD \citep{PASD}, StableSR \citep{stablesr}, start the diffusion process from low-quality images instead of pure noise to enhance efficiency and accelerate sampling. PASD relies solely on low-quality initialization during inference, resulting in a decoupled and suboptimal process. Methods like ResShift and RDDM introduce custom noise schedulers, which require full retraining and are incompatible with off-the-shelf diffusion models. Taking a step further, we propose a bitrate-aware residual diffusion scheme specifically designed for image compression, retaining the original noise scheduler for compatibility. Additionally, our evaluation demonstrates that dynamically adjusting diffusion time steps based on compression ratios enables a more efficient framework.

\textbf{Diffusion-based Compressors.}
Several diffusion-based image codecs have been proposed recently~\citep{theis2022lossy, DIRAC, hoogeboom2023high, yang2023lossy,ma2024correcting,xu2024idempotence}. These methods utilize diffusion models to achieve good performance at relatively high bitrates ($\text{bpp} > 0.1$). With the emergence of latent LDMs~\citep{LDM}, \citet{lei2023text+} employed it with PEZ~\citep{pez} for low-bitrate compression, further enhancing spatial quality using binary contour maps. Similarly, \citet{PerCo} integrated vector-quantized image features and captions generated by a feed-forward model~\citep{BLIP} to improve compression performance.
Overall, current diffusion-based codecs have begun to show outstanding performance, but the fidelity gap between AI-generated and original content persists, and the coding efficiency of these frameworks has not been fully explored at low-bitrate scenarios. 

\section{Framework Overview}
The framework is illustrated in Figure \ref{fig:arch}. It can be divided into three major parts: the feature compressor, the semantic residual coding (Sec.~\ref{sec:src}) and the bitrate-aware diffusion model (Sec.~\ref{sec:ard}).

The overall compression target consists of two interrelated parts:  the \textit{latent features}, which contain information such as structures, textures, and contours, and the extracted \textit{texts}, which capture the remaining semantic information in the image. Two compressors are proposed to handle these components in a coordinated framework. 
The Feature Compressor
first map the image $ x $ to latent space, obtaining the latent feature $ z_0 = \mathcal{E} (x) $. A neural compression network with similar architecture like~\cite{ELIC} with encoder $E_\phi$ and decoder $D_\theta$ is then implemented to compress the latent features. 
\begin{equation}
    z_c = C_\Theta(z_0) = D_\theta( E_\phi (z_0) )
\end{equation}
The reconstructed $z_c$ will serve as a guiding
condition for Visual Adapter to control the diffusion to produce the final high-quality reconstruction. Detailed description can be found in the Appendix \ref{app:visual adapter}.
\begin{equation}
\begin{aligned}
    &x' = \mathcal{D}(z_c), ~~ \textit{c} = \mathbf{f}_\textit{mllm}(x,x'),\\
    &R = R_\textit{c} + R_{z_c}, \\
    &x_r = \mathcal{D}(\textbf{f}_{dm}(\textit{c}, z_c, N)).
\end{aligned}
\end{equation}
Here $R$ represents the overall bitrates combining texts and latents. The texts $c$ is retrieved through MLLM model $\mathbf{f}_\textit{mllm}$. $x'$ represents the intermediate reconstruction by the latent decoder $\mathcal{D}$. Final reconstructed image $x_r$ is obtained by a diffusion model $\mathbf{f}_\textit{dm}$ and the latent decoder.

\section{Semantic Residual Coding (SRC)}
\label{sec:src}

\label{subsec:semantic_residual_compressor}
As aforementioned, existing methods usually rely on MLLM to extract full-text descriptions of the original image as conditional guidance for compression or post-processing. Take postprocessing-based frameworks as an example: no bits are needed to transmit the text, as it is reconstructed on the decoder side. However, lost semantics cannot be accurately restored.  
\begin{equation}
\begin{aligned}
    \textit{c} &= \mathbf{f}_\textit{mllm}(x'),
    ~~ R_c = 0.
\end{aligned}
\end{equation}
Meanwhile, many existing attempts based on diffusion separately transmit two bitstreams for image and text, the retrieved texts can be redundant since much of the information is already contained in $x'$, resulting in unnecessary bitrate overhead.
\begin{equation}
\begin{aligned}
    \textit{c} &= \mathbf{f}_\textit{mllm}(x),
    ~~ R_c \neq 0.
\end{aligned}
\end{equation}
It would become an obvious drawback for compression tasks especially when the total bitrate is low. Two modules are proposed to address this issue.

\subsection{Semantic residual retrieval (Srr)}
\begin{figure}[htbp]\scriptsize
\centering
\includegraphics[width=0.48\textwidth]{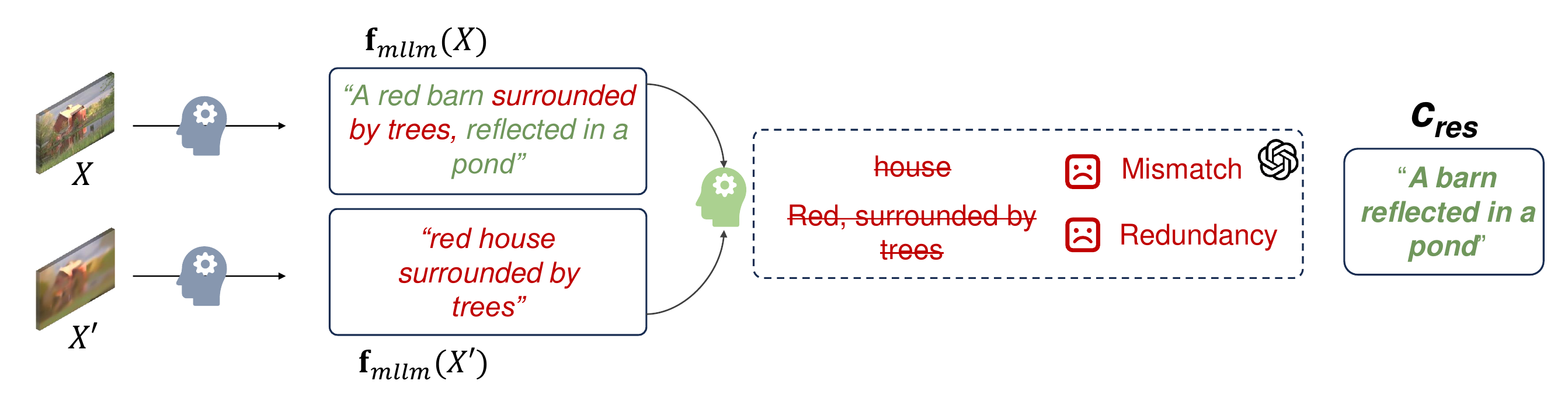}
\vspace{-18pt}
\caption{The process of \textbf{Semantic Residual Retrieval (Srr)} involves implementing the MLLM to remove redundant text and correct inconsistencies.}
\vspace{-5pt}
\label{fig:SRR}
\end{figure}
In Figure~\ref{fig:SRR}, we redesign the caption retrieval pipeline. In addition to using the original image to obtain full caption as common practice, we also get the caption from the decoded image directly from the decoded compressed latent feature. Subsequently, we input both captions into an LLM, which outputs semantic information present in the original image but missing in the compressed image as
\begin{equation}
\vspace{-3pt}
   c_\textbf{res} = \textbf{f}_\textit{mllm}(x) \ominus_\textit{mllm} \textbf{f}_\textit{mllm}(x'),
\end{equation}
where $\ominus_{mllm}$ indicates the residual retrieval process realized by LLM. This process enables the extraction of precise and compact textual conditions, allowing for adaptive optimization of bitrate allocation between text and latent representations. A detailed demonstration is provided in~\cref{sec:results} in~\cref{fig:src}. As the latent bitrate $R_{z_c}$ increases, $x' \to x$ and $R_c \to 0$, our framework approaches pure postprocessing. Conversely, as $R_{z_c}$ decreases, $x' \to 0$ and $R_{c_{\text{res}}} \to R_c$, requiring more information to be encoded within the semantic representation. 

\subsection{Perceptual fidelity optimization (Pfo)}

\begin{figure}[htbp]\scriptsize
\begin{center}
\includegraphics[width=0.48\textwidth]{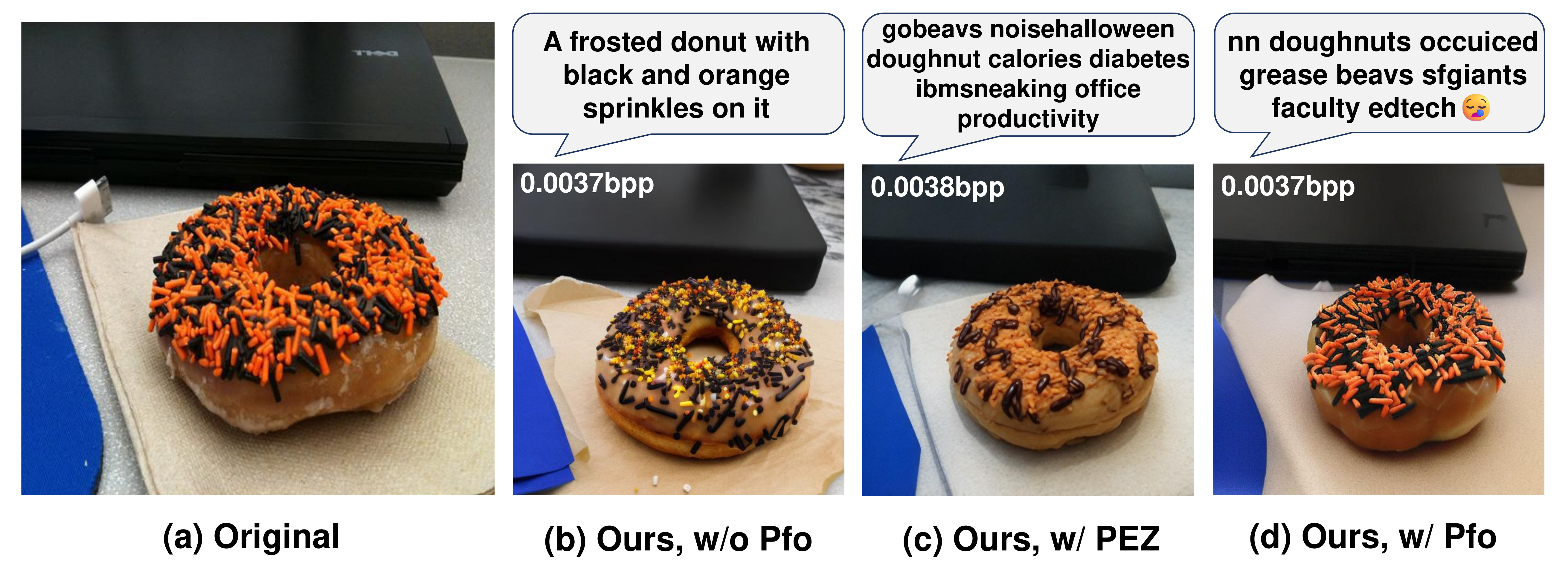}
\vspace{-17pt}
\caption{Demonstration of \textbf{Pfo} optimized prompts and corresponding reconstruction. The prompts after optimization are human-readable with mixed real words and gibberish (non-word token sequences).}
\vspace{-10pt}
\label{fig:PRO}
\end{center}
\end{figure}

Although captions generated by MLLM assist reconstruction, they often fail to capture detailed textures and structures, creating a consistency gap between the reconstructed and original images. This limitation hinders fidelity. To address this, we propose a differential optimization process tailored for the diffusion model. Our goal is to find the prompt that is most suitable for the entire ResULIC with the best perceptual fidelity.

Formally, input captions are first converted into token indices using a tokenizer, referencing a predefined vocabulary $E^{|V| \times d}$ used in CLIP~\citep{clip}, where \( |V| \) is the vocabulary size of the model and \( d \) is the dimension of the embeddings. Instead of random initialization as in PEZ, here we initialize the learnable continuous embedding \( P = [e_i,...,e_{M}]\) by previously obtained $c_\textbf{res}$, where $M$ is the number of tokens worth of vectors to optimize. The subsequent optimization process can be seen in Algorithm~\ref{alg:pro}. 

The optimization is driven by the loss function:
\begin{align}
\mathcal{L}_\text{pfo} &= \lambda_l \mathbb{E}_{z_t,\epsilon} \left[ \| \epsilon - \epsilon_\theta(z_n, n, z_c, E_{\text{clip-c}}(P')) \|_2^2 \right] \nonumber \\
& \quad + \lambda_c \mathcal{L}_\text{aux}(P', x),
\label{equ:pro}
\end{align}
where the first term represents the denoising loss for predicting the noise at timestep $n$, to stabilize the optimization process, we also incorporate Equation \(\mathcal{L}_\text{aux}(P', x) = 1 - S(E_{\text{clip-c}}(P'), E_{\text{clip-v}}(x))\) as an auxiliary loss function with a small weighting factor $\lambda_c$, \( E_{\text{clip-c}} \) and \( E_{\text{clip-v}} \) denote the text and visual encoders of the CLIP model, \( S \) is the cosine similarity between two embedding vectors. During optimization, the embedding $P$ is projected into the discrete space using the $\text{Proj}_\mathbf{E}$ function, which finds the nearest embedding $P' = \text{Proj}_\mathbf{E}(P)$ in the CLIP embedding space. Euclidean distance is used for this projection, ensuring that the learned embeddings stay aligned with the vocabulary space of the CLIP model. The final optimized texts can be obtained from the updated $P = P - \gamma \frac{d\mathcal{L}_\text{pro}}{d{P'}}$ after several iterations with learning rate $\gamma$.

\textbf{Index Coding.} In addition, we propose an efficient text encoding method that replaces zlib-based character encoding by encoding text embedding indices, with the decoding side using CLIP's text encoder to map these indices back to embeddings, significantly reducing bitrate consumption. See Appendix~\ref{appx:index_coding} for details.

\begin{algorithm}[t]
   \caption{Pfo: Perceptual fidelity optimization}
   \label{alg:pro}
\begin{algorithmic}[1]
\REQUIRE Diffusion model: $\theta$, CLIP model: $\Omega$, Target image: $x$, Initialed embedding: $ P $, Added Timesteps: $N_r$, Denoising steps: $N_d$, Selected Timesteps: $n$; Learning rate: $\lambda$, Optimization steps: $i$\\
\FOR{$1, ..., i$}
    \STATE {\color{gray} \# Forward Projection:}
    \STATE $  P' \gets \text{Proj}_\mathbf{E}(P) $
    \STATE {\color{gray} \# Select time step $n$:}
    \STATE $n \gets random(\{N_d/N_r, 2N_d/N_r, \dots, N_r\})$
    \STATE {\color{gray} \# Calculate the gradient w.r.t. the \textit{projected} embedding:}
    \STATE $g \gets \nabla_{\mathbf{P'}}(\mathcal{L}_\text{pro}(\theta, \Omega, z_n,n,P',x))$
    \STATE {\color{gray} \# Update the embedding:}
    \STATE ${P} \gets {P}-\lambda g$
\ENDFOR
\STATE \textbf{return} $P \gets \text{Proj}_{\mathbf{E}}[P]$
\end{algorithmic}
\end{algorithm}

\section{Compression-aware Diffusion Model (CDM)}
\label{sec:ard}

During the forward process for typical diffusion models, Gaussian noise is gradually added to the clean latent feature \( z_0 \). The intensity of the noise added at each step is controlled by the noise schedule $ \beta_t $. This process can be written as:
\begin{equation}
z_t = \sqrt{\bar{\alpha}_t} z_0 + \sqrt{1 - \bar{\alpha}_t} \epsilon, \quad t \in \{ 1, 2, \ldots, T \},
\end{equation}
where $\epsilon \sim \mathcal{N}(0, \mathbf{I})$ is a sample from a standard Gaussian distribution. Here, $\alpha_t = 1 - \beta_t$ and $\bar{\alpha}_t = \prod_{i=1}^t \alpha_i$. As $t$ increases, the corrupted representation $z_t$ gradually approaches a Gaussian distribution. 

In image compression, efficiency and reliability take precedence over the diversity of reconstructed images. Starting from standard Gaussian noise introduces unnecessary uncertainty. Therefore, we aim to avoid scenarios where the endpoint of noise addition (i.e., the starting point of denoising) is purely random noise. To address this, we define the following formulation:
\begin{equation}
\vspace{-5pt}
z_N = \sqrt{\bar{\alpha}_{N_r}} z_c + \sqrt{1 - \bar{\alpha}_{N_r}} \epsilon, \quad {N_r} < T.
\label{zN}
\end{equation}

Existing works~\cite{PASD,stablesr} employ similar strategies for tasks like deblurring and super-resolution typically, assuming fixed degradation levels (e.g., upsampling ratios). In contrast, we jointly explore the correlation among \textbf{bitrate}, \textbf{distortion}, and \textbf{diffusion steps}, as shown in Figure~\ref{fig:correlation_3d}. A significant challenge lies in understanding how varying levels of compression noise, quantified by the compressed latent residual $\rho_{\textit{res}} = z_c - z_0$, impact the diffusion process.
\begin{figure}[t]\scriptsize
\centering
\includegraphics[width=0.48\textwidth]{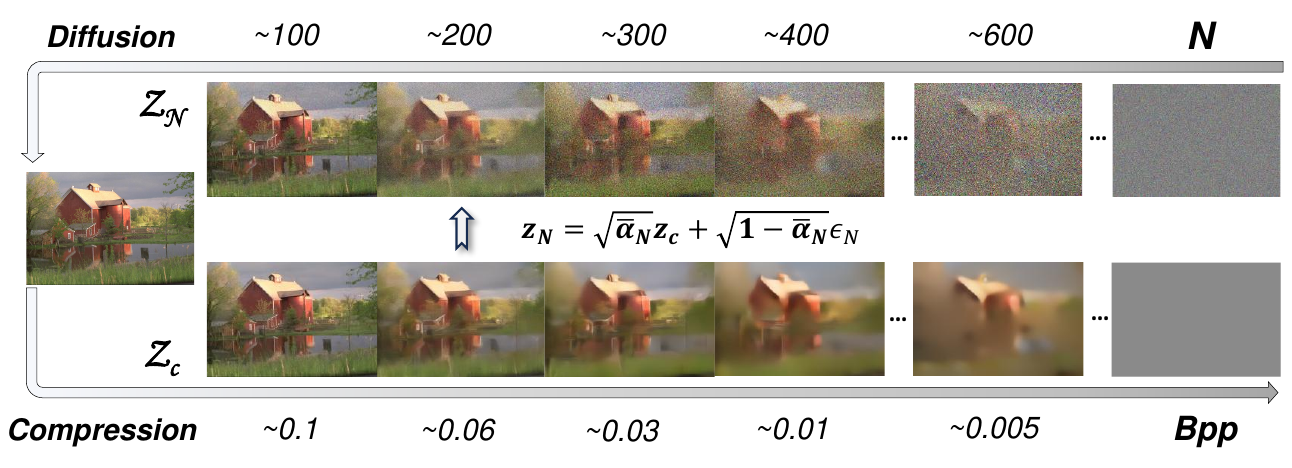}
\vspace{-20pt}
\caption{The intrinsic relationship between diffusion steps and compression ratios (bitrates). Larger diffusion steps correspond to lower bitrates.}
\vspace{-8pt}
\label{fig:add_noise_2}
\end{figure}

\subsection{Modeling Diffusion Steps with Compression Levels} 

DDPM~\cite{ho2020denoising} already provided a very preliminary prototype on modeling the bitrates and the reconstruction quality under the scenario of lossy image compression. 
In diffusion models, the information entropy \( R_N \) at each timestep \( N \) can be approximately quantified using the KL divergence between the true posterior \( q(z_{N-1} | z_N, z_0) \) and the model posterior \( p_\theta(z_{N-1} | z_N) \) reflects the information entropy of \( z_N \) as:
\begin{equation}
R_N \sim D_{\text{KL}}(q(z_{N-1} \mid z_N, z_0) \,\|\, p_\theta(z_{N-1} \mid z_N)).\end{equation}
This brutal correlation mapping between Rate and Diffusion steps provides the intuition that: At larger $N$ (near the end of the diffusion process with high noise), $z_N$  carries less information about  $z_0$. Thus, the KL divergence and  $R_N$  are smaller, enabling higher compression rates but lower reconstruction quality. In contrast, at smaller $N$ (near the start of the diffusion process with low noise),  $z_N$  retains more information about  $z_0$, leading to larger KL divergence and $R_N$, resulting in lower compression rates but higher reconstruction quality.

Figure~\ref{fig:add_noise_2} visually illustrates this principle. 
To address these variations, we need to dynamically adjust \( N \) based on the compression ratio.
In~\cref{fig:correlation_3d}, we present a three-factor analysis of rate (bpp),  $N_r$, and quality (LPIPS). As observed, reconstruction quality varies with bitrate. Fixing the denoising steps, as shown in Figure~\ref{fig:correlation_3d}(c), results in suboptimal performance. 
Guided by the observation in~\cref{fig:correlation_3d}, we propose to adapt the fixed  $N$  into a compressed bitrate-adaptive $N_r$. Specifically, for different compression ratios, our method selects varying noise-adding endpoints while retaining the original noise schedule of the diffusion model. CDM overcomes the limitation of modifying noise schedules in the pre-trained models, achieving visually pleasing reconstruction without the need for retraining. Details of the diffusion process is provided in the following parts. Relevant proofs and derivations are provided in the subsequent sections and the Appendix~\ref{appx:Mathematical Framework}.
\begin{figure}[t]\scriptsize
\begin{center}
\includegraphics[width=0.48\textwidth]{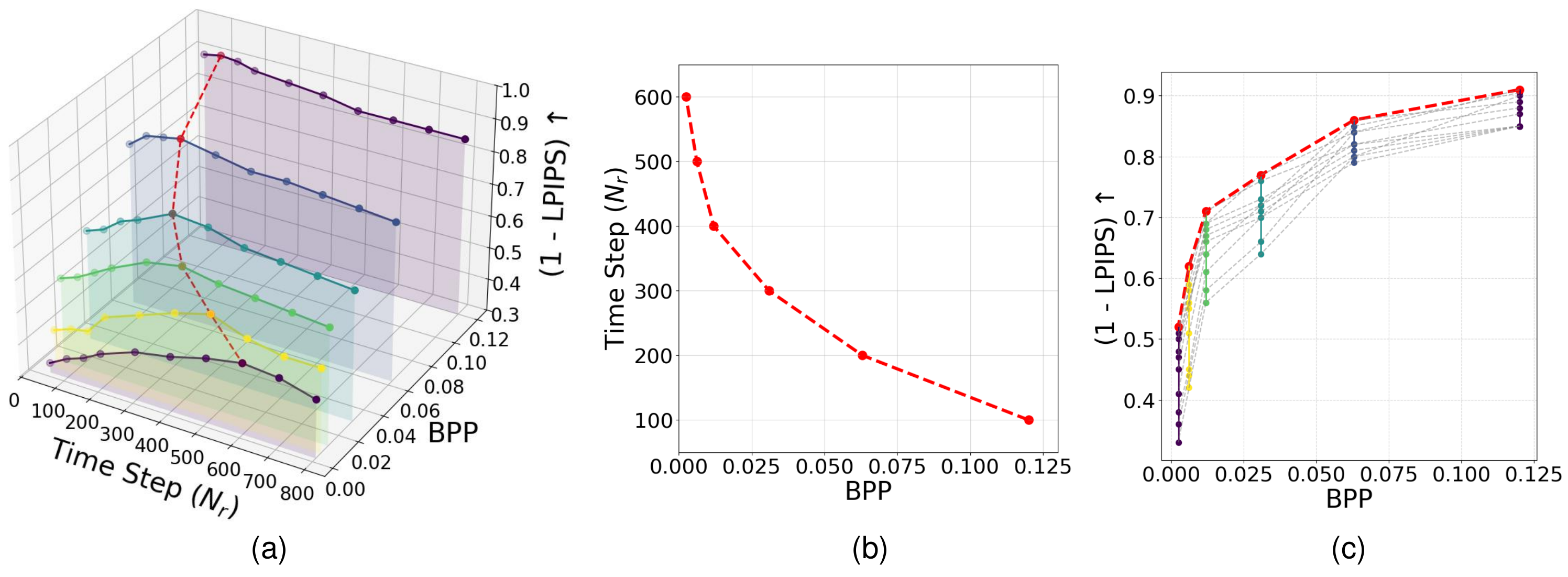}
\end{center}
\vspace{-18pt}
\caption{\textbf{Correlation between \textit{Bitrates} and \textit{Diffusion Steps}}. (a) Reconstruction quality reaches the peak at different diffusion steps for different bitrates. (b) \textbf{Peak Projection Curve} where optimal $N_r$ decreases as bpp increases. (c) The adaptive strategy in (b) performs notably better (red curve) than fixed diffusion steps.
}
\vspace{-8pt}
\label{fig:correlation_3d}
\end{figure}

\subsection{Noise Adding Process}

Existing methods that utilize residual diffusion, such as ResShift~\citep{ResShift} and RDDM~\citep{RDDM}, typically employ new noise schedules and design a Markovian forward process. Here, to implement Equation~\eqref{zN}, we adopt the noise schedule of Stable Diffusion~\citep{LDM} and design a non-Markovian forward process that does not rely on \( q(z_n | z_{n-1}) \).
\begin{definition}[\textit{Noise Addition Mechanism}]
    \begin{equation}
    \vspace{-5pt}
q(z_n | z_0, \rho_{\text{res}}) \sim \mathcal{N}\left( \sqrt{\bar{\alpha}_n} z_0 + \sqrt{1 - \bar{\alpha}_n} \gamma_n \rho_{\text{res}}, (1 - \bar{\alpha}_n) \mathbf{I} \right)
\vspace{-8pt}
    \end{equation}
    \vspace{-5pt}
    \label{eq:z_n-1_distribution_main}
\end{definition}
where \( z_n \) can be sampled using the following equation:
\begin{equation}
z_n = \sqrt{\bar{\alpha}_n} z_0 + \sqrt{1 - \bar{\alpha}_n} \left( \gamma_n \rho_{\text{res}} + \epsilon_n \right), ~~ n \in \{ 1, \ldots, N_r \}
\label{add}
\end{equation}


\subsection{Reverse Sampling Process}

\begin{theorem}[Conditional Independence of \(z_n\) and \(z_{n-1}\)]
Given the distributions defined in Equation~\eqref{add}, we have 
\(
z_n \perp z_{n-1} \mid z_0, z_c,
\)
\label{theorem:independence_main}
\end{theorem}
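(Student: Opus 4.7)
The plan is to reduce the claim to the non-Markovian construction of the forward process: by design each $z_n$ in Equation~\eqref{add} is produced by a fresh Gaussian draw $\epsilon_n$, and the joint law over the trajectory is stipulated (as in DDIM-style non-Markovian schedules) by declaring $\{\epsilon_n\}_{n=1}^{N_r}$ mutually independent. Under that stipulation the proof is essentially a short conditional-independence argument, and no Markov-chain transition $q(z_n \mid z_{n-1})$ is ever needed.

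First I would note that since $\rho_{\text{res}} = z_c - z_0$ is a deterministic function of $(z_0, z_c)$, conditioning on $(z_0, z_c)$ is equivalent to conditioning on $(z_0, \rho_{\text{res}})$. So it suffices to establish
\[
q(z_n, z_{n-1} \mid z_0, \rho_{\text{res}}) \;=\; q(z_n \mid z_0, \rho_{\text{res}}) \, q(z_{n-1} \mid z_0, \rho_{\text{res}}).
\]
Next I would write, using Equation~\eqref{add},
\[
z_n = \sqrt{\bar\alpha_n}\, z_0 + \sqrt{1-\bar\alpha_n}\bigl(\gamma_n \rho_{\text{res}} + \epsilon_n\bigr),
\]
\[
z_{n-1} = \sqrt{\bar\alpha_{n-1}}\, z_0 + \sqrt{1-\bar\alpha_{n-1}}\bigl(\gamma_{n-1} \rho_{\text{res}} + \epsilon_{n-1}\bigr),
\]
so that, given $(z_0, \rho_{\text{res}})$, each $z_k$ is an affine function of $\epsilon_k$ alone.

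Because $\epsilon_n$ and $\epsilon_{n-1}$ are independent standard Gaussians, the conditional joint $q(z_n, z_{n-1} \mid z_0, \rho_{\text{res}})$ is a pushforward of a product measure through a map that acts coordinatewise on $(\epsilon_{n-1}, \epsilon_n)$. Hence it factorizes, and the two marginals coincide with the distributions in Definition~\ref{eq:z_n-1_distribution_main} taken at indices $n$ and $n-1$, giving the desired factorization and therefore conditional independence.

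The only real obstacle is one of interpretation rather than calculation: one must argue that the non-Markovian forward construction is well-defined, i.e.\ that a joint law on $(z_1, \ldots, z_{N_r})$ compatible with the per-timestep marginals of Definition~\ref{eq:z_n-1_distribution_main} exists and is the natural one induced by independent $\{\epsilon_n\}$. I would address this by simply declaring (consistent with Equation~\eqref{add}) that the $\epsilon_n$ are drawn i.i.d., which immediately yields the marginals in the definition and makes the conditional-independence identity above an immediate consequence. No further Kolmogorov-type consistency check is needed because the joint is defined constructively from the independent noise sources.
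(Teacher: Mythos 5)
Your proposal is correct and takes essentially the same route as the paper: both rest on the observation that, conditional on $(z_0, z_c)$, the variable $z_n$ is an affine function of $\epsilon_n$ alone, and that $\epsilon_n \perp \epsilon_{n-1}$, so the conditional joint factorizes. You spell out a couple of extra details the paper leaves implicit (equivalence of conditioning on $(z_0,z_c)$ versus $(z_0,\rho_{\text{res}})$, and the well-definedness of the non-Markovian joint via constructively drawing i.i.d.\ $\epsilon_n$), but the underlying argument is the same.
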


\begin{proof}
Under the given model, \(z_n\) is determined solely by \((z_0, z_c)\) and the noise term \(\epsilon_n\), which is independent of \(\epsilon_{n-1}\). Consequently, conditioning on \(z_{n-1}\) provides no additional information about \(z_n\), implying
\(
q(z_n | z_{n-1}, z_0, z_c) = q(z_n | z_0, z_c).
\)
Hence,
\(
z_n \perp z_{n-1} \;\bigm|\; z_0, z_c.
\)
\end{proof}

Based on Theorem~\ref{theorem:independence_main}, the conditional probability simplifies to 
\( q(z_{n-1} | z_n, z_0, z_c) = q(z_{n-1} | z_0, z_c) \sim \mathcal{N}\left( \sqrt{\bar{\alpha}_{n-1}} z_0 + \sqrt{1 - \bar{\alpha}_{n-1}} \gamma_{n-1} \rho_{\text{res}},\ (1 - \bar{\alpha}_{n-1}) \mathbf{I} \right) \).

\begin{assumption}
    \textit{Assume that the conditional probability can also be expressed as:}
\begin{equation}
    \vspace{-5pt}
    q(z_{n-1} \mid z_n, z_0, z_c) \sim \mathcal{N}\left( z_{n-1};\ \iota_n z_n + \zeta_n z_0,\ \sigma_n^2 \mathbf{I} \right).
    \label{assumption}
    \vspace{-5pt}
\end{equation}
\end{assumption}

Substituting Equation~\eqref{add} into Equation~\eqref{assumption}, we obtain:
\begin{align}
\iota_n &= \frac{\sqrt{1 - \bar{\alpha}_{n-1}}}{\sqrt{1 - \bar{\alpha}_n}}, \label{eq:iota_n} \\
\zeta_n &= \sqrt{\bar{\alpha}_{n-1}} - \sqrt{\bar{\alpha}_n} \cdot \iota_n, \label{eq:zeta_n} \\
\gamma_n &= \gamma_{n-1} = \frac{\sqrt{\bar{\alpha}_{N_r}}}{\sqrt{1 - \bar{\alpha}_{N_r}}}. 
\label{eq:gamma_n}
\end{align}
Here, we set \( \sigma_n \) to the same as in DDIM \cite{DDIM}, \(\sigma_n = \eta \sqrt{\frac{1 - \bar{\alpha}_{n-1}}{1 - \bar{\alpha}_n}} \sqrt{1 - \frac{\bar{\alpha}_n}{\bar{\alpha}_{n-1}}}\)
where \( \eta \) is a hyperparameter. By setting \( \eta = 0 \) or \( \eta = 1 \), we correspond to different sampling strategies, i.e., deterministic sampling or stochastic sampling. In the following experiments, \(\eta\) is set to 0 unless stated otherwise. Comprehensive derivations and experiments are available in Appendix~\ref{appx:sample_param} and \ref{appx:sampling}.
During directional sampling, samples are drawn from the model-predicted distribution \( p_\theta(z_{n-1} | z_n,\tilde{z}_0,z_c) \), i.e., \( z_{n-1} = \iota_n z_n + \zeta_n \tilde{z}_0 + \sigma_n \epsilon \), as illustrated in Algorithm \ref{alg:sampling} in the Appendix \ref{appx:sampling}.

\subsection{Training Objective}

While keeping the diffusion model fixed, we need to train the encoder and decoder in the latent space as well as the visual adapter. We derive the following simplified loss function for training, detailed proofs are provided in Appendix~\ref{appx:training}:
\begin{equation}
\mathcal{L}_\text{Vis} = \omega_n^2 \cdot \mathbb{E}_{z_0, c, z_c, n, \epsilon} \left\| \epsilon - \epsilon_\theta(z_n, c, z_c, t) \right\|^2 
\label{eq:loss_vis}
\end{equation}
where \(\omega_n^2 = \left(\frac{\zeta_n \sqrt{1 - \bar{\alpha}_n}}{ \sqrt{\bar{\alpha}_n} - \sqrt{1 - \bar{\alpha}_n} \cdot \gamma_n} \right)^2\)
and \(c\) is the condition from texts. During training, we omitting the \( \omega_n^2 \) parameter to stabilize training. Furthermore, we followed Equation~\eqref{eq:recovery_z0} in the Appendix to estimate the original data \( \hat{z}_0 \), decode it into the pixel domain \( \hat{x}_0 \), and performed further perceptual optimization using a weighted combination of LPIPS and MSE. Then we can rewrite Equation~\eqref{eq:loss_vis} as:
\begin{align}
\vspace{-6pt}
\mathcal{L}_{\text{Vis}} &= \mathbb{E}_{z_0, c, z_c, n, \epsilon} \left\| \epsilon - \epsilon_\theta(z_n, c, z_c, t) \right\|^2 \nonumber \\
& \quad + \lambda_{d} \, \text{MSE}(x_0, \hat{x}_0) + \lambda_{p} \, \text{LPIPS}(x_0, \hat{x}_0)
\vspace{-6pt}
\end{align}
Combining the $\mathcal{L}_\text{Vis}$ with feature compressor, we define the total loss function as:
\begin{equation}
\mathcal{L} = \mathcal{L}_\text{Vis} + \mathcal{L}_{D} + \lambda_{R} \mathcal{L}_{R}
\label{training}
\end{equation}
$\mathcal{L}_{D} = \lambda D(z_0, z_c)$ here is the distortion between $z_0$ and $z_c$. $\mathcal{L}_{R} = R(z_c) $ indicates the rate of the compressed latent. We adjust \(\lambda_{R}\) to control the trade-off between rate, distortion, and the diffusion prediction loss.

\begin{figure}[t]
\begin{center}
\includegraphics[width=0.48\textwidth]{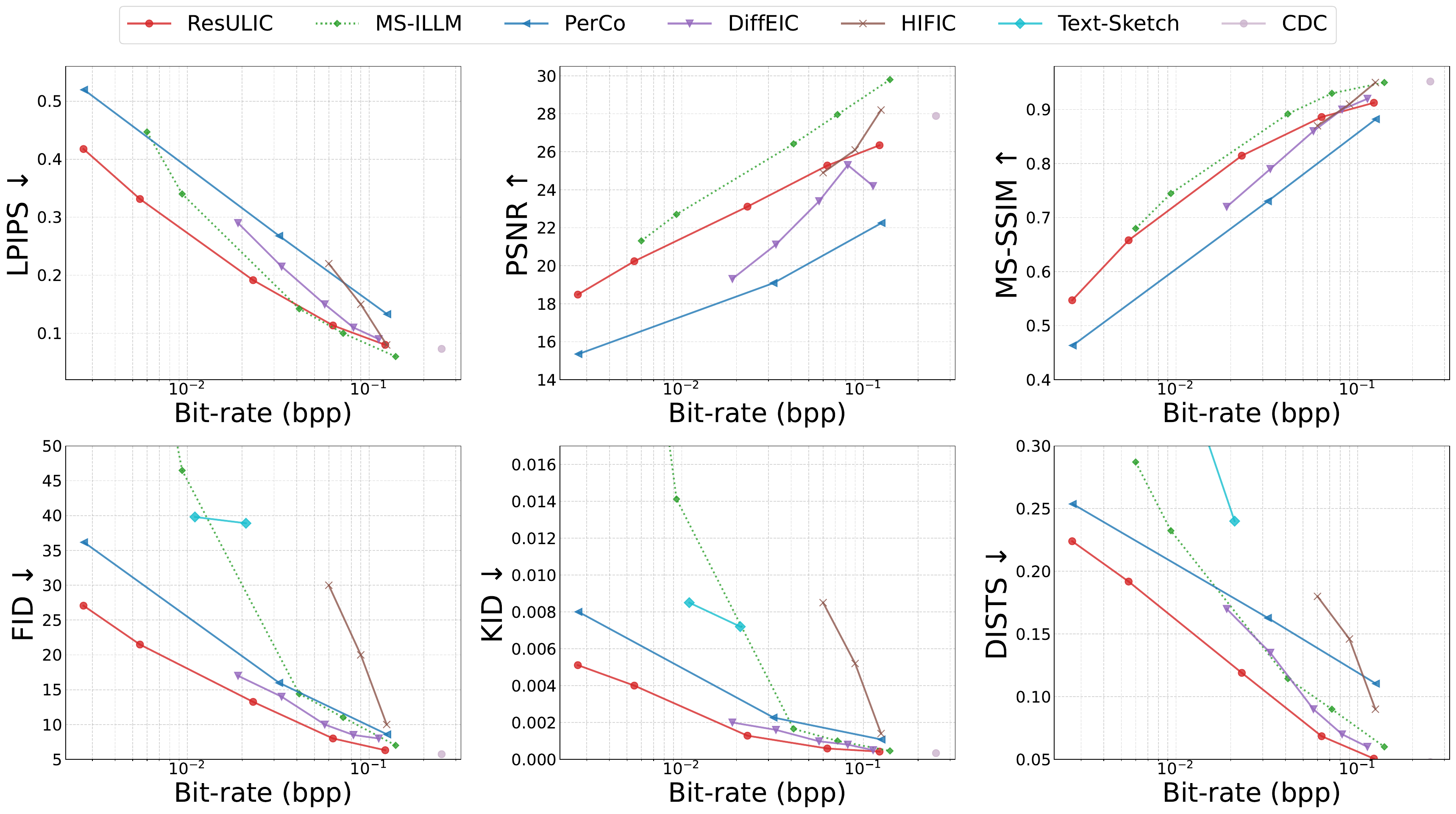}
\end{center}
\vspace{-18pt}
\caption{Quantitative comparision with state-of-the-art methods on CLIC-2020 dataset.}
\vspace{-10pt}
\label{fig:rd}
\end{figure}

\section{Experiments}    
\subsection{Experimental Setup}
Stable diffusion v2.1~\citep{LDM} is used as the backbone diffusion model. And the MLLM GPT4o~\cite{gpt4o} is applied to capture the image caption and their corresponding residual information. See Appendix \ref{appx:details} for training details. 

For evaluation, we tested several widely used datasets including CLIC-2020~\cite{CLIC2020} in the main paper, and Kodak~\cite{Kodak}, DIV2K~\cite{div2k}, Tecnick~\cite{tecnick} and MS-COCO~\cite{2018coco} in Appendix~\ref{Additional Experiments}. Multiple metrics are evaluated including PSNR, MS-SSIM~\cite{MS-SSIM}, LPIPS~\cite{LPIPS}, DISTS~\cite{DISTS} FID~\cite{FID} and KID~\cite{KID}.
Among them, FID and KID are used to evaluate perceptual realism by matching the feature distributions between the original and reconstructed image sets. In contrast, LPIPS and DISTS balance realism and fidelity, with relatively greater emphasis on the latter.

We compare our method to state-of-the-art codecs, including the traditional codec VTM~\citep{VVC_overview}, GAN-based neural compressors HiFiC~\citep{HiFiC}, MS-ILLM~\citep{MS-ILLM}, as well as diffusion-based compressors Text-sketch~\citep{Text+Sketch}, DiffEIC~\citep{diffeic}, CDC~\citep{CDC} and PerCo~\citep{PerCo}. PerCo evaluated here is a reproduced version\footnote{https://github.com/Nikolai10/PerCo}, as the official implementation is not publicly available. The diffusion model in this version has been fully refined for the compression task. Additionally, we provide comparisons with the original paper’s reported results in Appendix~\ref{appx:results}.

\subsection{Main Results} 
\label{sec:results}
\begin{figure}[t]
\begin{center}
\includegraphics[width=0.46\textwidth]{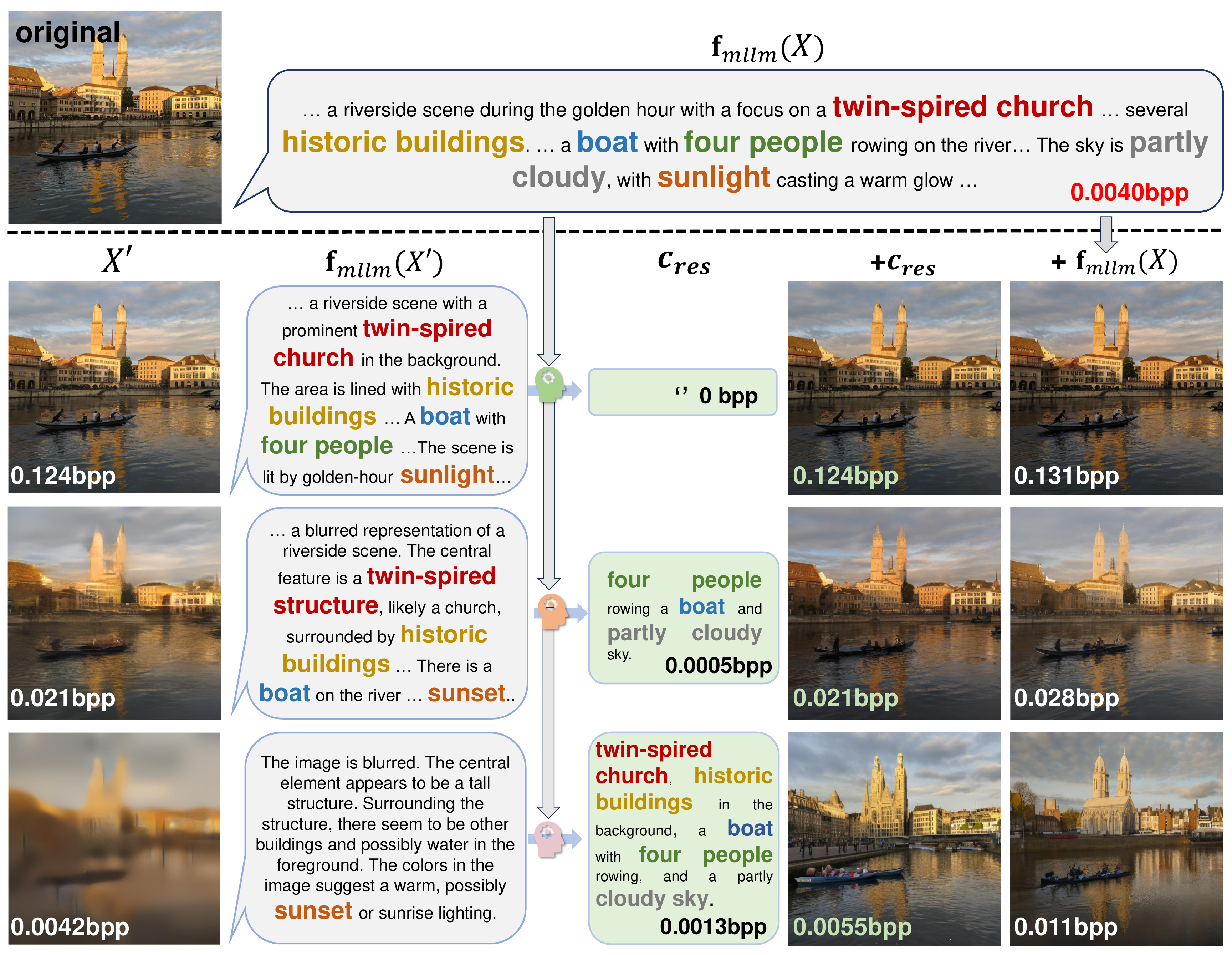}
\vspace{-10pt}
\caption{Demonstration of the \textbf{Srr} at different bitrates. As the bitrates get smaller, more residual information would be retrieved. Keywords are highlighted with colors.}
\vspace{-10pt}
\label{fig:src_2}
\end{center}
\end{figure}
Figure~\ref{fig:rd} presents a quantitative comparison across multiple metrics on the CLIC-2020 dataset. The results demonstrate that our method, ResULIC, consistently outperforms state-of-the-art approaches on perceptual realism metrics such as FID and KID. Moreover, for perceptual fidelity metrics like DISTS, ResULIC achieves superior performance, while for LPIPS, it excels at relatively low bitrates, highlighting its effectiveness in improving fidelity. Additional comparisons on datasets including Kodak, DIV2K, Tecnick, and MS-COCO are provided in Appendix~\ref{appx:rd}.

In addition, we performed a BD-Rate~\citep{bjontegaard2001calculation} comparison with existing methods using MS-ILLM as the anchor, as shown in Table~\ref{bdrate}. Our method achieves the best performance in terms of perceptual realism and perceptual fidelity, while outperforming existing diffusion-based ultra-low bitrate compression methods on distortion metrics.

\begin{table}[htbp]
\centering
\vspace{-15pt}
\caption{BD-Rate (\%) ↓ comparison with state-of-the-art methods on CLIC-2020 datasets. \textbf{ResULIC w/o Pfo} is a faster variant with the Pfo module disabled, while still maintaining good performance.}
\resizebox{\columnwidth}{!}{
\begin{tabular}{clcccccc}
\toprule
\multirow{2}{*}{Type} &\multirow{2}{*}{Methods} & \multicolumn{2}{c}{Perceptual Realism} & \multicolumn{2}{c}{Perceptual Fidelity} & \multicolumn{2}{c}{Distortion} \\ 
\cmidrule(lr){3-4} \cmidrule(lr){5-6} \cmidrule(lr){7-8}
 & & FID & KID & DISTS & LPIPS & PSNR & MS-SSIM \\ 
\midrule
\multirow{2}{*}{\textbf{GAN based}} 
& MS-ILLM & 0 & 0 & 0 & 0 & 0 & 0 \\
& HiFiC   & 65.0 & 154.7 & 39.0 & 75.7 & 122.3  & 46.2 \\
\cmidrule(lr){1-8}
\multirow{4}{*}{\textbf{Diffusion based}} 
& DiffEIC & -28.4 & -21.9 & 1.7  & 9.5  & 392.3 & 133.8 \\
& PerCo   & -17.9 & -25.7 & -0.77  & 113.4 & 1092.2 & 244.1 \\
& \cellcolor{orange!10}ResULIC w/o Pfo  & \cellcolor{orange!10}-54.9 & \cellcolor{orange!10}-64.7 & \cellcolor{orange!10}-45.7 & \cellcolor{orange!10}-18.8 & \cellcolor{orange!10}120.4 & \cellcolor{orange!10}41.8 \\ 
& \cellcolor{orange!20}ResULIC & \cellcolor{orange!20}\textbf{-62.9} & \cellcolor{orange!20}\textbf{-68.5} & \cellcolor{orange!20}\textbf{-51.6} & \cellcolor{orange!20}\textbf{-26.9} & \cellcolor{orange!20}120.3 & \cellcolor{orange!20}37.2 \\
\bottomrule
\end{tabular}
}
\label{bdrate}
\vspace{-5pt}
\end{table}



\textbf{Complexity.} 
The average encoding time for a 768$\times$512 Kodak image is 0.10s for the latent compressor, 5.77s for semantic residual coding using Srr (+180s if Pfo is enabled). Decoding averages 0.60s/0.43s for 4/3 denoising steps. More detailed complexity analysis is in Appendix~\ref{appx:complexity}.

\subsection{Evaluation}
\Cref{ablation_3} presents a comprehensive ablation study on the performance impact of each proposed module.

\textbf{Case 1: Detailed comparison with PerCo.} 
To more clearly demonstrate the advantages of our method over PerCo, we used PerCo as an anchor for comparison. 
(1) The results in Table~\ref{ablation_3} demonstrate that the Pfo module and Index Coning integrate seamlessly into the PerCo framework, consistently improving reconstruction quality.
(2) To solely evaluate the impact of the latent compressor, we provide a special version, ResULIC-VQ, which replaces our latent compressor with the same VQ compressor used in PerCo and disables all newly introduced modules. These comparisons highlight that the stable gains originate from our proposed modules.


\textbf{Case 2: Effectiveness of Semantic Residual Coding.} 

To quantify the gains of \textbf{SRC}, we conducted comprehensive experiments on the CLIC-2020 dataset. The results in Table~\ref{ablation_3} demonstrate that the Srr and Pfo module brings consistent improvements in realism and fidelity to ResULIC. 

\textbf{Adaptiveness of Srr to different bitrates.} 
In Figure~\ref{fig:src_2}, as the bitrate decreases, semantic information in \( x' \) diminishes, leading to vague or incorrect descriptions, yet reconstruction quality remains stable. Figure~\ref{fig:src} further shows that as total bpp decreases, the text bitrate proportion increases significantly, highlighting Srr's adaptivity at varying bitrates. Additional visual examples are provided in the Appendix~\ref{appx:results}.

\begin{figure}[ht]
\begin{center}
\includegraphics[width=0.47\textwidth]{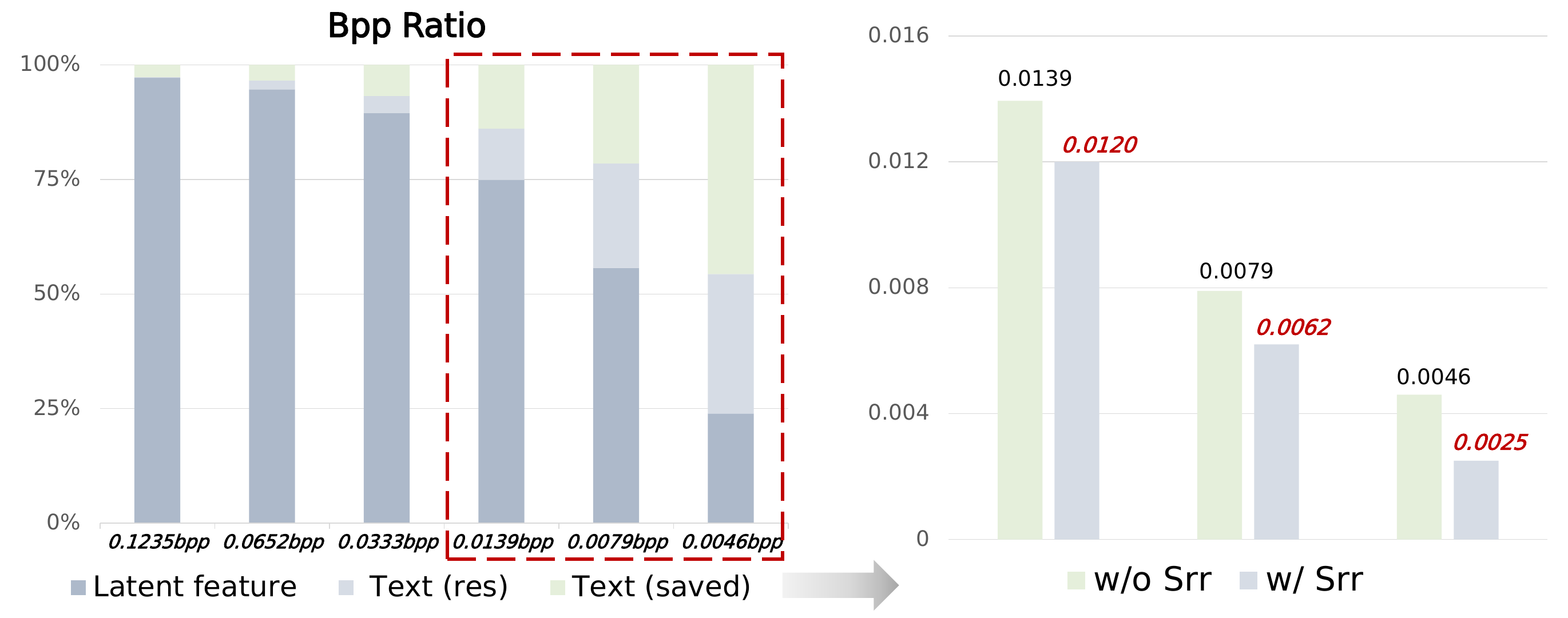}
\vspace{-8pt}
\caption{The relative text/latent bitrate ratio under different total bitrates. The \textbf{Srr} effectively removes more redundancy at lower bitrates.}
\vspace{-10pt}
\label{fig:src}
\end{center}
\end{figure}

\begin{table}[t]
\centering
\vspace{-12pt}
\caption{Ablation experiments comparing each module on the CLIC-2020 dataset using PerCo as the anchor.}
\resizebox{\columnwidth}{!}{
\begin{tabular}{ccccccccccc}
\toprule
\multirow{2}{*}{Method} & \multirow{2}{*}{\shortstack{w/ \\CDM}} & \multirow{2}{*}{\shortstack{w/ \\Index Coding}} & \multirow{2}{*}{\shortstack{w/ \\Srr}} & \multirow{2}{*}{\shortstack{w/ \\Pfo}} & \multicolumn{6}{c}{BD-Rate (\%) ↓}  \\ 
\cmidrule(lr){6-11} 
& & & & & LPIPS & DISTS & FID & KID & PSNR & MS-SSIM \\ 
\midrule
\multirow{3}{*}{PerCo} 
& -- & × & -- & × & 0 & 0 & 0 & 0 & 0 & 0  \\
& -- & \checkmark & -- & × & -6.7 & -6.6 & -6.8 & -7.7 & -6.1 & -6.6 \\
& -- & \checkmark & -- & \checkmark & -15.9 & -14.6 & --15.7 & -14.1 & -5.9 & -7.0  \\ \cdashlinelr{2-11}
\multirow{1}{*}{ResULIC-VQ} 
& -- & × & -- & × & 2.8 & -2.1 & -10.5 & -9.6 & -36.0 & -18.8  \\
\cmidrule(lr){2-11} 
\multirow{5}{*}{ResULIC} 
& × & × & × & × & -15.6 & -12.3 & -15.1 & -13.8 & -74.4 & -22.7  \\
& \checkmark & × & × & × & -53.1 & -46.9 & -36.1 & -38.4 & -76.2 & -48.8  \\
& \checkmark & \checkmark & × & × & -60.7 & -58.1 & -46.7 & -48.4 & -87.0 & -59.0\\
& \cellcolor{orange!10}\checkmark & \cellcolor{orange!10}\checkmark & \cellcolor{orange!10}\checkmark & \cellcolor{orange!10}× & \cellcolor{orange!10}-71.6 & \cellcolor{orange!10}-68.2 & \cellcolor{orange!10}-57.9 & \cellcolor{orange!10}-59.1 & \cellcolor{orange!10}\textbf{-96.8} & \cellcolor{orange!10}-69.1  \\
& \cellcolor{orange!20}\checkmark & \cellcolor{orange!20}\checkmark & \cellcolor{orange!20}\checkmark & \cellcolor{orange!20}\checkmark & \cellcolor{orange!20}\textbf{-80.7} & \cellcolor{orange!20}\textbf{-77.3} & \cellcolor{orange!20}\textbf{-66.3} & \cellcolor{orange!20}\textbf{-66.8} & \cellcolor{orange!20}-96.2 & \cellcolor{orange!20}\textbf{-70.2}  \\
\bottomrule
\end{tabular}
}
\vspace{-8pt}
\label{ablation_3}
\end{table}

\textbf{The impact of different MLLMs.} 
MLLM models have been evolving rapidly. Existing methods may use different MLLMs for text retrieval.  Here, we evaluate several models, including one offline model Llama-3.2-11B-Vision-Instruct~\cite{llama}, and three online models gpt-4o \cite{gpt4o}, SenseChat-Vision, and Qwen-VL-MAX \cite{qwen}. ~\Cref{tab:bdrate_mllm} demonstrates the flexibility of our method across different MLLMs. The tested MLLMs achieve stable and comparable performance, especially after applying the Pfo module to optimize semantic retrieval.
 

\begin{table}[t]
\centering
\vspace{-10pt}
\caption{Performance comparison of MLLM models on Kodak.}
\resizebox{\columnwidth}{!}{
\begin{tabular}{cccccc}
\toprule
\multirow{2}{*}{Method} &\multirow{2}{*}{MLLM} & \multicolumn{4}{c}{BD-Rate (\%) ↓}  \\ 
\cmidrule(lr){3-6} 
& & LPIPS & DISTS & PSNR & MS-SSIM \\ 
\midrule
\multirow{4}{*}{\textbf{ResULIC w/o Pfo}} 
&GPT-4o & 0 & 0 & 0 & 0  \\
&Llama-3.2-11B & 1.5 & 0.1 & 1.5 & -0.2  \\
&Qwen-VL-MAX & \textbf{-1.6} & 3.2 & 0.5 & 0.52  \\
&SenseChat-Vision & 0.42 & -1.1 & 0.3 & 0.8 \\
\cmidrule(lr){2-6} 
\multirow{4}{*}{\textbf{ResULIC}} 
&GPT-4o & -6.1 & -3.8 & 0.8 & 3.1  \\
&Llama-3.2-11B & --5.8 & -4.2 & 1.2 & 3.5  \\
&Qwen-VL-MAX & -6.0 & -3.9 & 0.9 & 3.1  \\
&SenseChat-Vision & -5.7 & -4.0 & 1.1 & 3.4 \\
\bottomrule
\end{tabular}
}
\vspace{-8pt}
\label{tab:bdrate_mllm}
\end{table}

\textbf{Case 3: Effectiveness of Compression-aware Diffusion.} 

By integrating \textit{Definition}~\ref{eq:z_n-1_distribution_main}, ResULIC leverages the existing noise schedule of Stable Diffusion. This approach eliminates the computational overhead associated with retraining a new latent diffusion model, as required by methods like ResShift and RDDM when modifying the noise schedule. It also significantly reduces denoising time redundancy while achieving substantial improvements across all metrics. 

Table~\ref{ablation_ans} uses the Adjustable Noise Schedule (ANS) proposed in PASD~\cite{PASD} as the baseline for comparison. ANS affects only the inference stage by introducing signal
information from the low-quality input image.
Compared to w/o CDM (i.e., without using our \textit{Definition}~\ref{eq:z_n-1_distribution_main} for training, and initializing from random noise during testing), ANS shows some improvements, but it remains inadequate.
In contrast, our proposed CDM bridges training and inference, effectively reducing the train-test discrepancy and achieving superior performance.


\begin{table}[ht]
\centering
\vspace{-10pt}
\caption{Bitrate-aware Diffusion enables both accelerated inference speed (with fewer steps) and improved BD-rate performance.}
\resizebox{\columnwidth}{!}{
\begin{tabular}{cccccc}
\toprule
\multirow{2}{*}{Method} & \multirow{2}{*}{\shortstack{w/ \\CDM \\(Denoising Steps)}} & \multicolumn{4}{c}{BD-Rate(\%) ↓}  \\ 
\cmidrule(lr){3-6} 
& & LPIPS & DISTS & PSNR & MS-SSIM \\ 
\midrule
\multirow{1}{*}{ANS} 
& ×(50/20) & 0 & 0 & 0 & 0  \\
\cmidrule(lr){2-6} 
\multirow{2}{*}{ResULIC} 
& ×(50/20) &  4.6 & 5.3 & 6.8 & 47.1  \\
& \cellcolor{orange!20}\checkmark(4/3) & \cellcolor{orange!20}-54.4 & \cellcolor{orange!20}-65.8 & \cellcolor{orange!20}-49.7 & \cellcolor{orange!20}-23.9  \\
\bottomrule
\end{tabular}
}
\vspace{-5pt}
\label{ablation_ans}
\end{table}
\section{Conclusion}

In this paper, we proposed ResULIC, a residual-guided ultra lowrate
image compression with Semantic Residual Coding and Biterate-aware Diffusion. Extensive experiments show both promising perceptual quality and reliability under ultra low bitrates. The proposed optimization strategy also demonstrates flexibility in being quickly applied to existing frameworks. 
\section{Acknowledgement}

This work was supported in part by the Key Project of Jiangsu Science and Technology Department under Grant BK20243038, and in part by the Key Project of the National Natural Science Foundation of China under Grant 62431011. The authors would like to express their sincere gratitude to the Interdisciplinary Research Center for Future Intelligent Chips (Chip-X) and Yachen Foundation for their invaluable support. This project was also funded by Kuaishou Technology.

\section*{Impact Statement}

This work aims to advance the field of machine learning with applications in image compression. It holds potential value for research on image compression tasks under extreme bandwidth constraints and, to the best of our knowledge, raises no ethical concerns.


\bibliography{example_paper}
\bibliographystyle{icml2025}

\newpage
\appendix
\onecolumn
\section{Additional Evaluation}
\label{Additional Experiments}

\begin{figure}[ht]
\centering
\includegraphics[width=0.3\textwidth]{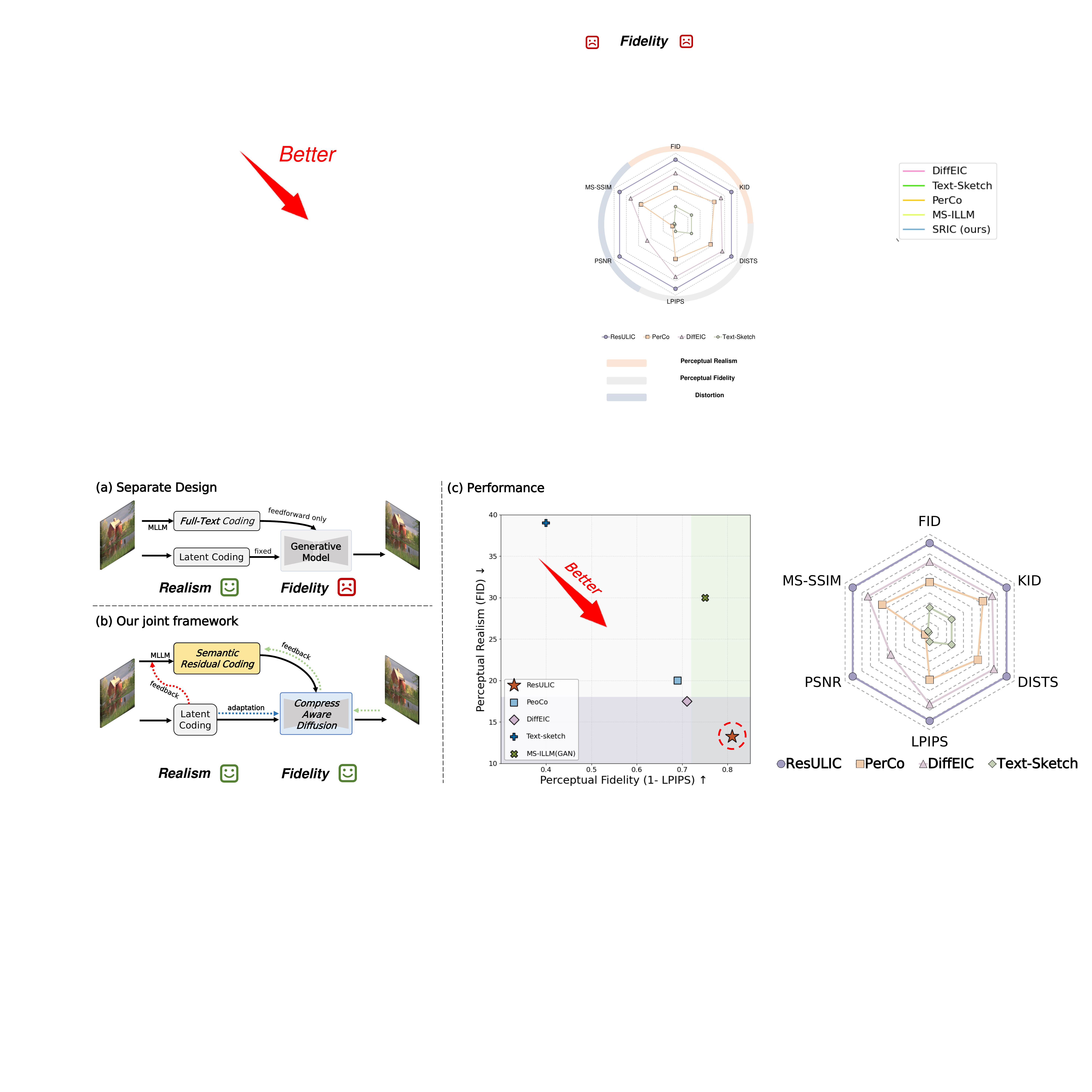}
\caption{\textbf{Perceptual realism \textit{vs.} perceptual fidelity} at 0.02 bpp on the CLIC2020 dataset.}
\label{fig:vis_clic}
\end{figure}

\subsection{Comparison on other dataset}
\label{appx:rd}
In addition to the experiments on the CLIC2020~\cite{CLIC2020} dataset, we conducted more detailed comparisons on MS-COCO~\cite{2018coco}, DIV2K~\cite{div2k}, Tecnick~\cite{tecnick}, and Kodak~\cite{Kodak}. As shown in Figure~\ref{fig:MSCOCO-3K}, \ref{fig:DIV2K}, \ref{fig:Tecnick}, our method consistently outperforms state-of-the-art approaches on perceptual realism metrics such as FID and KID, and achieves superior performance in perceptual fidelity metrics like DISTS, while excelling in LPIPS at relatively low bitrates, demonstrating its effectiveness in improving fidelity. Since CorrDiff~\cite{ma2024correcting} and GLC~\cite{glc} are not open-source and tested differently on other datasets, we selected two bitrate points from their Kodak experiments for comparison, as illustrated in Figure~\ref{fig:Kodak}.

\textbf{Evaluation details}: For evaluation on the CLIC2020, DIV2K, and Tecnick datasets, we followed the approach of CDC~\cite{CDC} by resizing images to a short side of 768 and then center-cropping them to 768×768. For MSCOCO-3K, we randomly selected 3,000 images from the MSCOCO dataset and, following PerCo, resized them to 512×512 for testing. The FID and KID metrics for all datasets were calculated on 256×256 patches, as described in HiFiC~\cite{HiFiC}.

\begin{figure}[ht]
\centering
\includegraphics[width=0.75\textwidth]{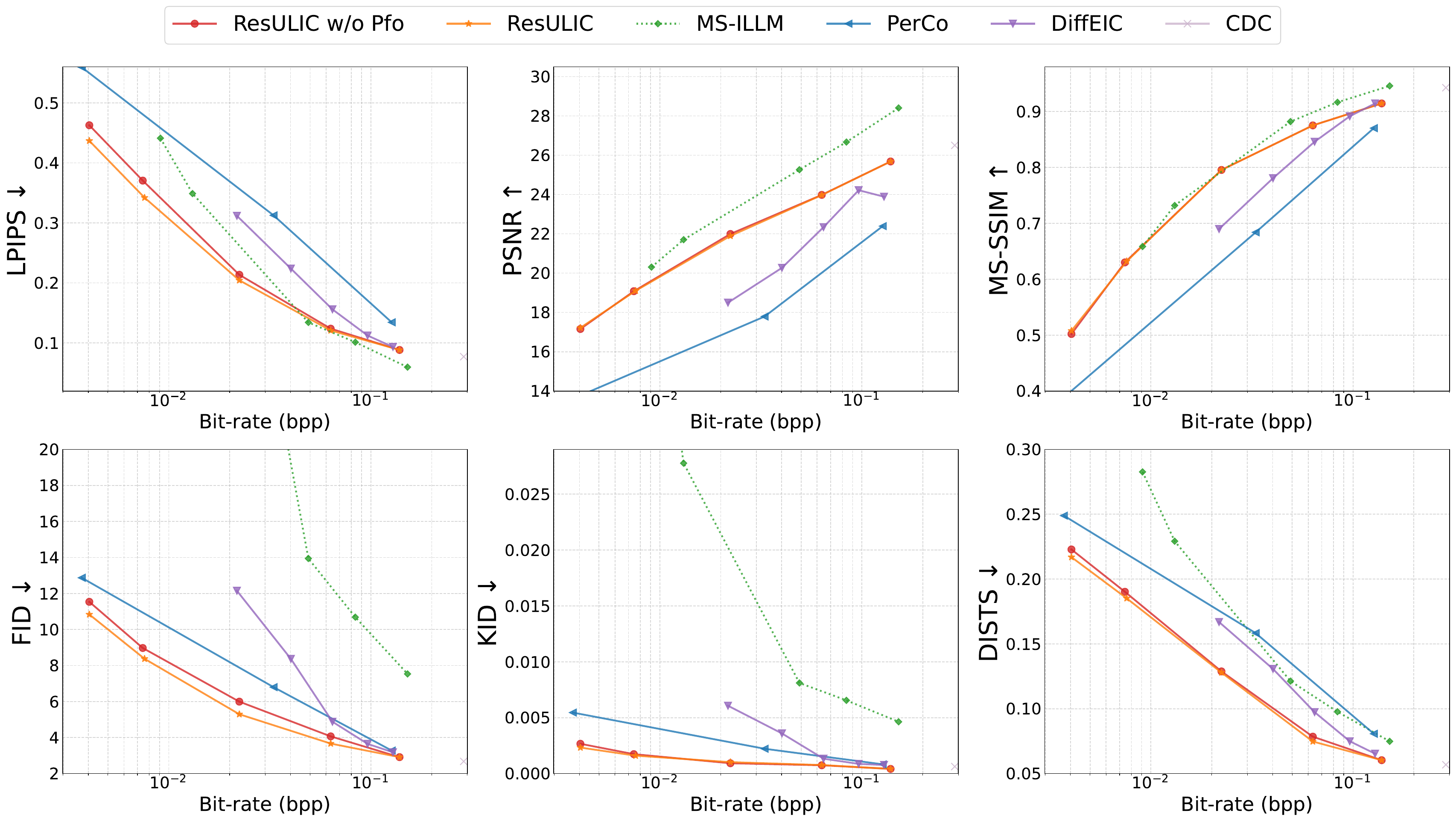}
\caption{Quantitative comparision with state-of-the-art methods on MSCOCO-3K datasets.}
\label{fig:MSCOCO-3K}
\end{figure}

\begin{figure}[ht]
\centering
\includegraphics[width=0.7\textwidth]{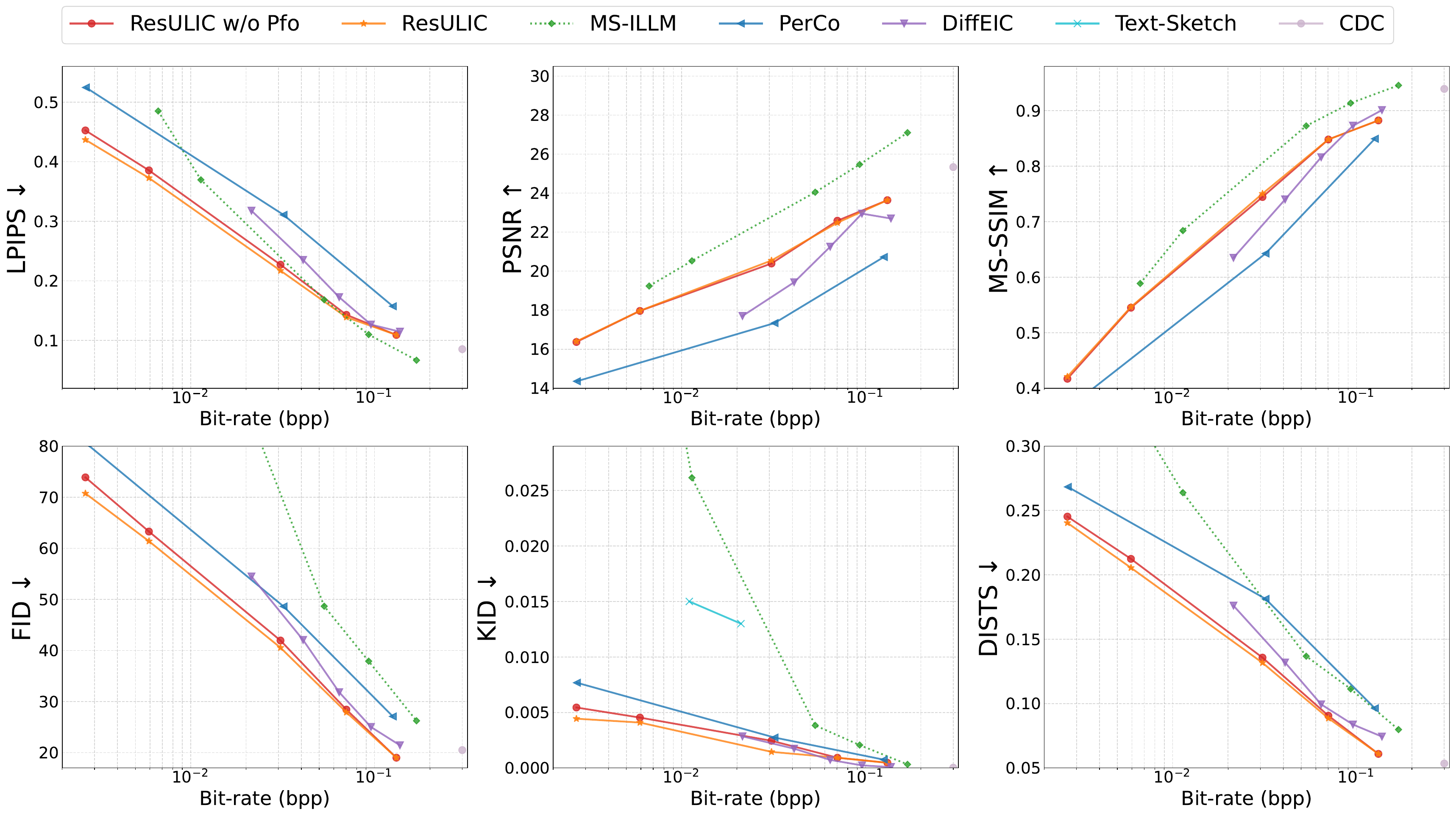}
\caption{Quantitative comparision with state-of-the-art methods on DIV2K datasets.}
\label{fig:DIV2K}
\end{figure}

\begin{figure}[ht]
\centering
\includegraphics[width=0.7\textwidth]{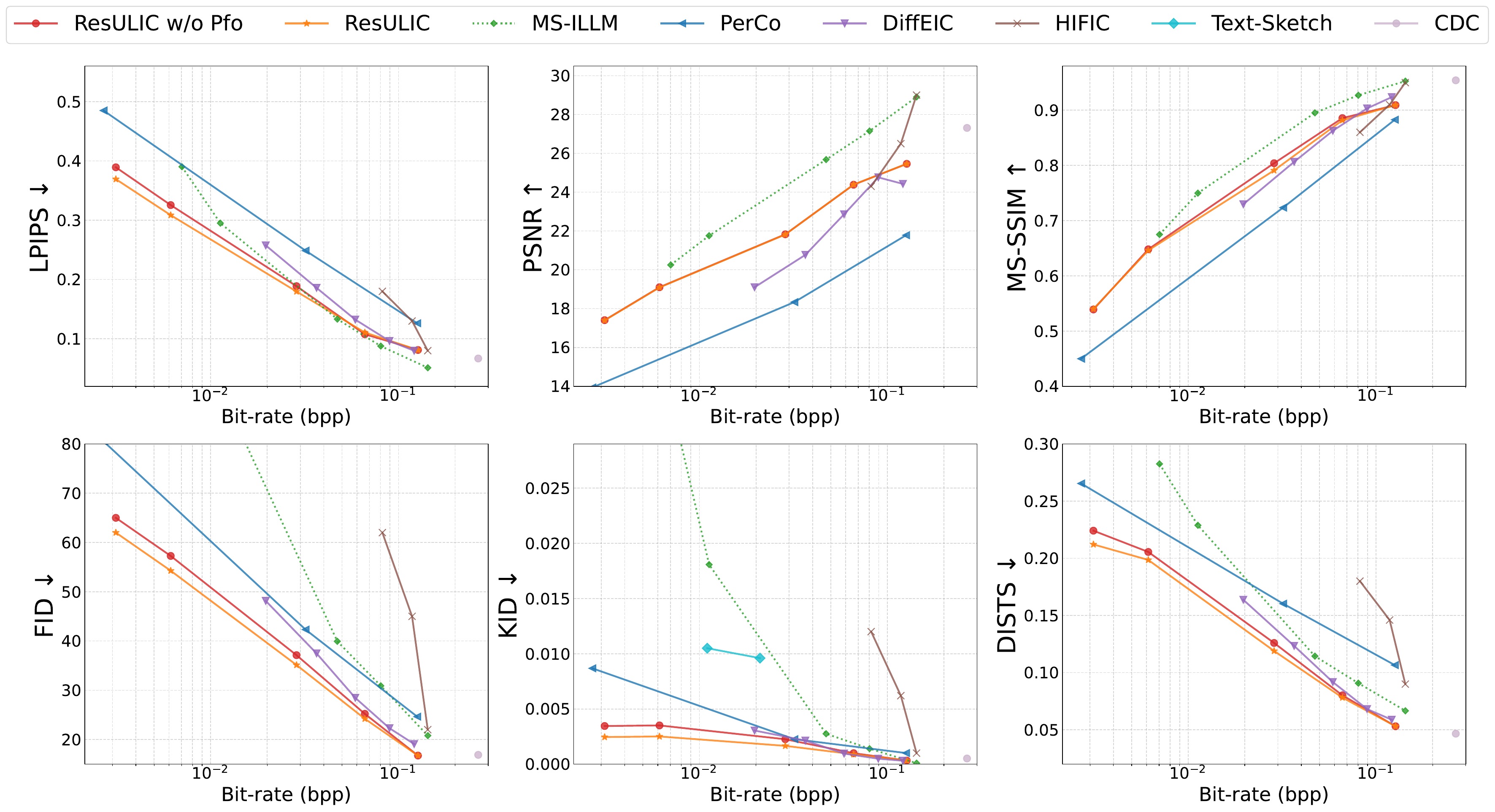}
\caption{Quantitative comparison with state-of-the-art methods on Tecnick datasets.}
\label{fig:Tecnick} 
\end{figure}

\begin{figure}[ht]
\centering
\includegraphics[width=0.5\textwidth]{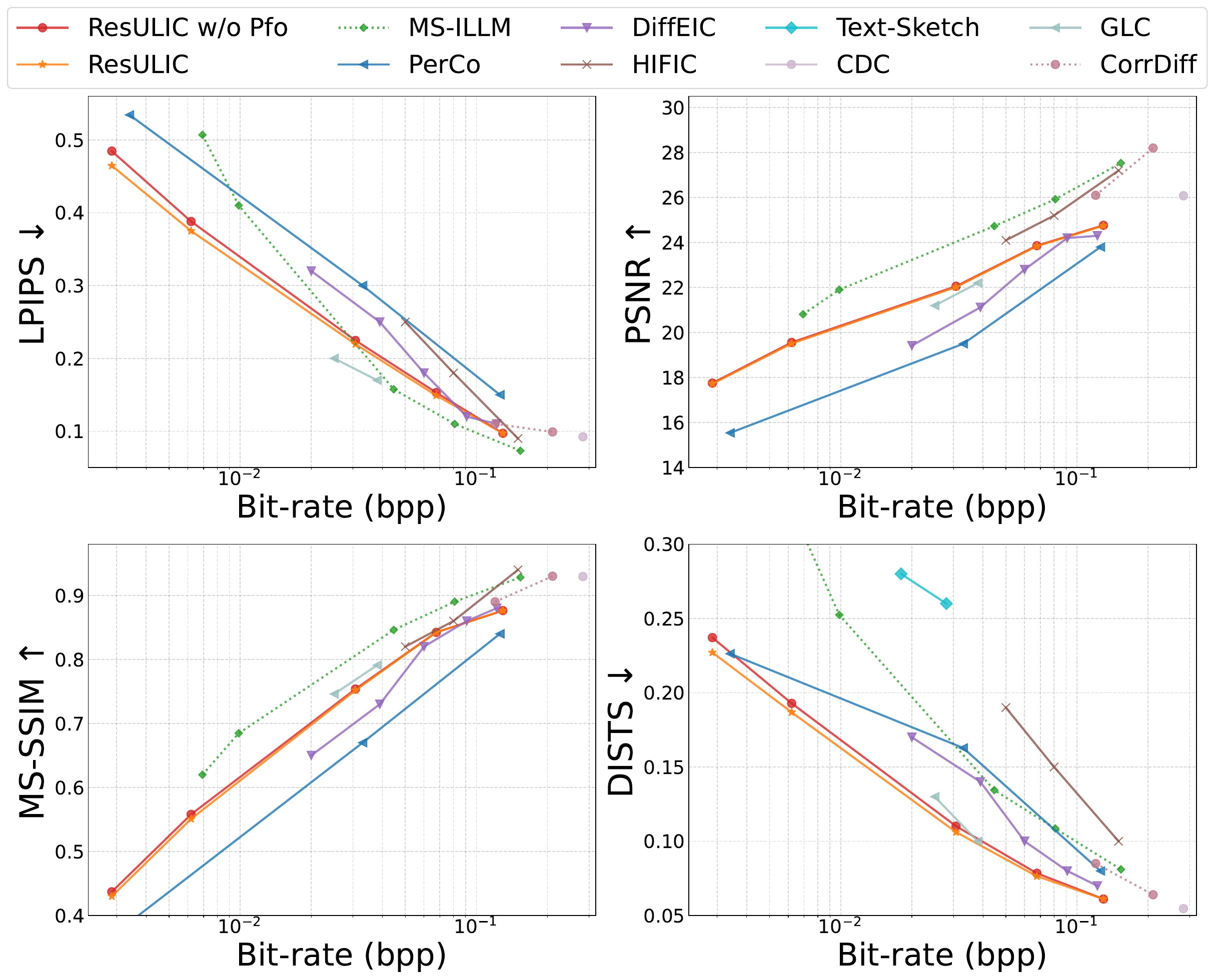}
\caption{Quantitative comparison with state-of-the-art methods on Kodak datasets.}
\label{fig:Kodak} 
\end{figure}

\subsection{Additional Experiments of Pfo}
\label{appx:pfo}
We also compared our method with existing approaches, such as the hard prompt method PEZ. Both visual and quantitative results confirm that Pfo significantly enhances perceptual quality and objective metric performance, particularly under low-bitrate conditions. For example, as shown in Table~\ref{table:inv}, the optimized textual information produced by Pfo improves the fidelity of frameworks like PerCo and Text+Sketch, which employ entirely different architectures for compressing low-level content. Notably, this enhancement is achieved without requiring additional retraining, as the Pfo module leverages pretrained MLLM models, making it a flexible and efficient plug-in tool.

\begin{table}[ht]
\centering
\caption{Evaluation of Pfo with different low-level content compressors. PerCO uses a ``codebook+hyperprior'' model for compression. Text+Sketch compresses the binary contour maps and has already applied PEZ in its framework.}
\small
\begin{tabular}{c|c|c|c}
\toprule
\multirow{2}{*}{\makecell{\textbf{Compressor}\\(content)}} & \multicolumn{3}{c@{}}{\textbf{bpp/LPIPS} $\downarrow$} \\ 
\cmidrule(lr){2-4} 
                       & \textit{No optimizer} & w/ PEZ & \textbf{w/ Pfo} \\ 
\midrule
PerCo                  & 0.0033/0.546  & 0.0035/0.532      & 0.0033/\textbf{0.518}         \\ 
Text+Sketch              & - & 0.0280/0.586      & 0.0280/\textbf{0.565}         \\ 
ResULIC              & 0.0028/0.49  & 0.0028/0.49      & 0.0028/\textbf{0.46}         \\ 
\bottomrule
\end{tabular}
\label{table:inv}
\end{table}

\subsection{The Impact of Index Coding}
\label{appx:index_coding}
Our proposed index coding is a simple yet effective method for text encoding. As illustrated in Figure~\ref{fig:index}, the text encoding logic of CLIP first tokenizes the prompt into several words, matches these words to their corresponding indices in a predefined vocabulary, and then maps these indices to their respective text embedding vectors. In contrast to directly encoding text strings, our method encodes the indices corresponding to CLIP's text embedding vectors. From Figure~\ref{fig:index}, we can see that our method can save a lot of text bit rate consumption compared to zlib.

\begin{figure}[ht]
\centering
\includegraphics[width=0.9\textwidth]{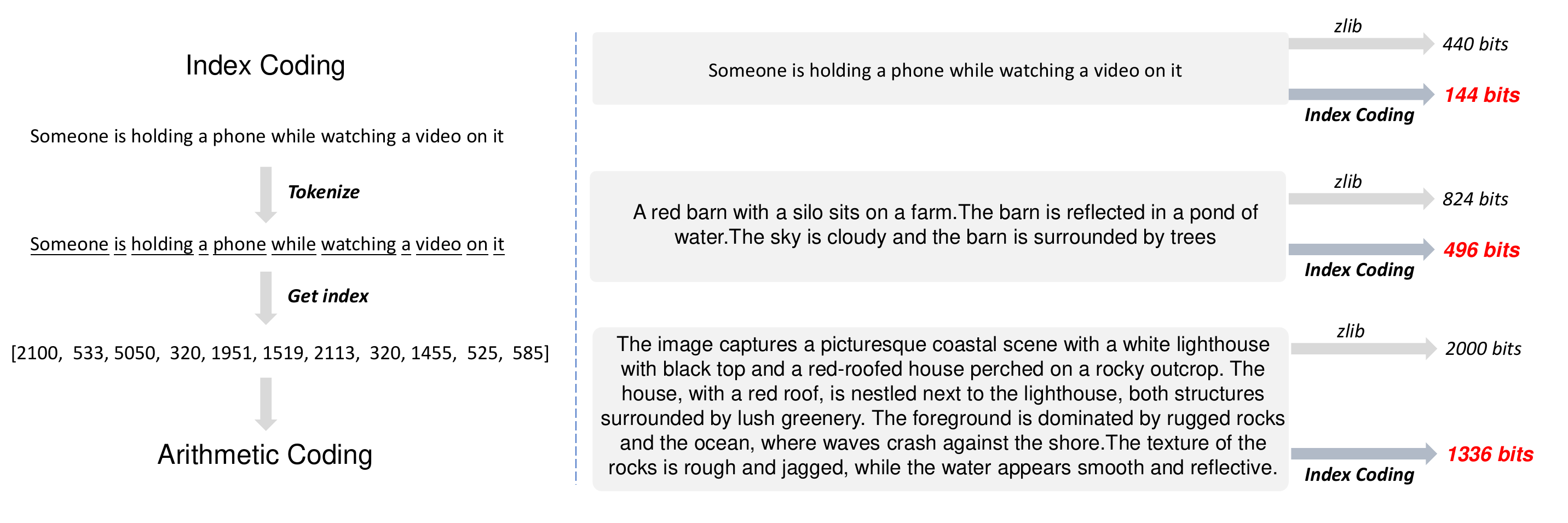}
\caption{Visualization of Index Coding.}
\label{fig:index} 
\end{figure}

\subsection{The impact of sampling method}
\label{appx:sampling}
As shown in Appendix~\ref{appx:sample_param}, when \(\eta = 0\) and \(\eta = 1\), the sampling process corresponds to deterministic sampling and stochastic sampling, respectively. We conducted experiments on the CLIC2020 dataset, with the results presented in the figure below. These results reveal that adding noise during sampling compromises the consistency of reconstruction. Therefore, deterministic sampling (\(\eta = 0\)) is the better choice.

\begin{figure}[ht]
\centering
\includegraphics[width=0.7\textwidth]{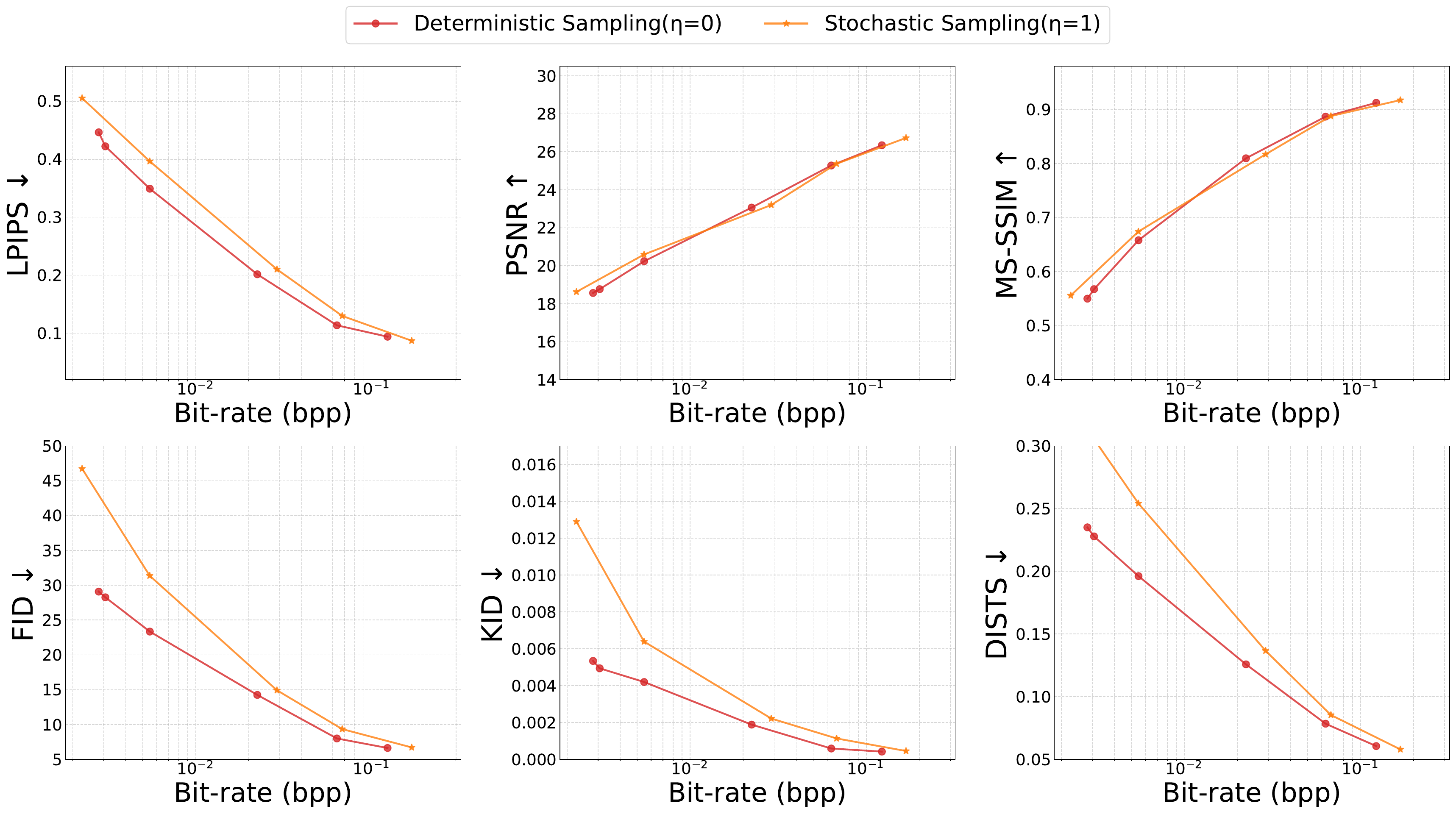}
\caption{Quantitative comparision of deterministic sampling(\(\eta = 0\)) and stochastic sampling(\(\eta = 1\)) on CLIC2020 datasets.}
\label{fig:ddpm} 
\end{figure}

\newpage
\section{Mathematical Framework}
\label{appx:Mathematical Framework}

In this section, we formalize the noise addition mechanism integral to our model and derive the essential parameter relationships. We begin by defining the distributions of the latent variables \( z_n \) and \( z_{n-1} \), establish their independence, and subsequently derive the conditional probabilities. Finally, we present the parameter settings for different sampling methods, specifically DDIM and DDPM, and discuss the formulation of the training loss.

\subsection{Noise Addition Mechanism}
\label{appx:noise_add}

\begin{definition}[Noise Addition Mechanism]
    For each timestep \( n \), the latent variables \( z_n \) and \( z_{n-1} \) are defined by the following Gaussian distributions:
    \begin{align}
        z_n &\sim \mathcal{N}\left( \sqrt{\bar{\alpha}_n} z_0 + \sqrt{1 - \bar{\alpha}_n} \gamma_n \rho_{\text{res}},\ (1 - \bar{\alpha}_n) \mathbf{I} \right), \label{eq:z_n_distribution} \\
        z_{n-1} &\sim \mathcal{N}\left( \sqrt{\bar{\alpha}_{n-1}} z_0 + \sqrt{1 - \bar{\alpha}_{n-1}} \gamma_{n-1} \rho_{\text{res}},\ (1 - \bar{\alpha}_{n-1}) \mathbf{I} \right). \label{eq:z_n-1_distribution}
    \end{align}
\end{definition}

Here, \( \bar{\alpha}_n \) denotes the cumulative product of \( \alpha_n \) up to timestep \( n \), \( \gamma_n \) is a scaling factor, \( \rho_{\text{res}} \) represents a residual term, and \( \mathbf{I} \) is the identity matrix.

\subsection{Independence of Consecutive Latent Variables}

\begin{theorem}[Conditional Independence of \(z_n\) and \(z_{n-1}\)]
Given the distributions defined in Equations~\eqref{add}, we have 
\(
z_n \perp z_{n-1} \mid z_0, z_c,
\)
\label{theorem:independence}
\end{theorem}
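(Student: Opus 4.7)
The plan is to exploit the fact that Equation~\eqref{add} defines a \emph{non-Markovian} forward process in which every $z_m$ is written directly in terms of $z_0$, the residual $\rho_{\text{res}} = z_c - z_0$, and a \emph{fresh} Gaussian noise $\epsilon_m$, rather than through a recursion starting from $z_{m-1}$. Because $\rho_{\text{res}}$ is a deterministic function of $(z_0, z_c)$, conditioning on $(z_0, z_c)$ fixes every quantity on the right-hand side of Equation~\eqref{add} except the noise term, so all the randomness in $z_n$ and $z_{n-1}$ is carried by $\epsilon_n$ and $\epsilon_{n-1}$, which are drawn independently by construction.

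Concretely, I would first rewrite Equation~\eqref{add} for $m \in \{n, n-1\}$ as
\[
z_m = \mu_m(z_0, z_c) + \sqrt{1 - \bar{\alpha}_m}\, \epsilon_m, \qquad \mu_m(z_0, z_c) := \sqrt{\bar{\alpha}_m}\, z_0 + \sqrt{1 - \bar{\alpha}_m}\, \gamma_m \rho_{\text{res}},
\]
emphasising that $\epsilon_n$ and $\epsilon_{n-1}$ are independent draws from $\mathcal{N}(0, \mathbf{I})$ in the generative definition of the process. Next I would argue that, conditional on $(z_0, z_c)$, the map $(\epsilon_n, \epsilon_{n-1}) \mapsto (z_n, z_{n-1})$ is an invertible affine transformation that acts separately in each component, so the conditional joint density factorises: $q(z_n, z_{n-1} \mid z_0, z_c) = q(z_n \mid z_0, z_c)\, q(z_{n-1} \mid z_0, z_c)$. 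This is precisely $z_n \perp z_{n-1} \mid z_0, z_c$, and it immediately yields the consequence $q(z_n \mid z_{n-1}, z_0, z_c) = q(z_n \mid z_0, z_c)$ that is used in the subsequent derivation of Equations~\eqref{eq:iota_n}--\eqref{eq:gamma_n}.

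The main obstacle is conceptual rather than technical: readers accustomed to DDPM will instinctively expect a Markov chain in $m$, in which case $z_n$ and $z_{n-1}$ would \emph{not} be conditionally independent given $(z_0, z_c)$. I would therefore stress that the non-Markovian joint is a deliberate \emph{design choice}, analogous to the DDIM construction, made precisely so that the closed-form reverse kernel assumed in Equation~\eqref{assumption} is consistent with the prescribed marginals in Definition~\ref{eq:z_n-1_distribution_main}. Once this framing is clear, no further calculation is needed beyond invoking the independence of $\epsilon_n$ and $\epsilon_{n-1}$ in the generative construction.
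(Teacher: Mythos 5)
Your proof is correct and takes essentially the same route as the paper's: both rest on the observation that, under the non-Markovian forward process of Equation~\eqref{add}, each $z_m$ is a deterministic function of $(z_0, z_c)$ plus its own freshly drawn Gaussian $\epsilon_m$, so conditioning on $(z_0, z_c)$ removes all shared randomness and the joint conditional density factorises. Your version is somewhat more explicit (writing out $\mu_m$ and invoking the block-diagonal affine change of variables), but the underlying argument is identical to the paper's one-line appeal to the independence of $\epsilon_n$ and $\epsilon_{n-1}$.
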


\begin{proof}
Under the given model, \(z_n\) is determined solely by \((z_0, z_c)\) and the noise term \(\epsilon_n\), which is independent of \(\epsilon_{n-1}\). Consequently, conditioning on \(z_{n-1}\) provides no additional information about \(z_n\), implying
\(
q(z_n | z_{n-1}, z_0, z_c) = q(z_n | z_0, z_c).
\)
Hence,
\(
z_n \perp z_{n-1} \;\bigm|\; z_0, z_c.
\)
\end{proof}

\subsection{Conditional Probability Reduction}

Based on Theorem \ref{theorem:independence}, the conditional probability simplifies to:
\begin{equation}
    q(z_{n-1} \mid z_n, z_0, z_c) = q(z_{n-1} \mid z_0, z_c) \sim \mathcal{N}\left( \sqrt{\bar{\alpha}_{n-1}} z_0 + \sqrt{1 - \bar{\alpha}_{n-1}} \gamma_{n-1} \rho_{\text{res}},\ (1 - \bar{\alpha}_{n-1}) \mathbf{I} \right).
    \label{eq:conditional_prob}
\end{equation}

\subsection{Derivation of Parameter Relationships}

Assume that the conditional probability can also be expressed as:
\begin{equation}
    q(z_{n-1} \mid z_n, z_0, z_c) \sim \mathcal{N}\left( z_{n-1};\ \iota_n z_n + \zeta_n z_0,\ \sigma_n^2 \mathbf{I} \right).
    \label{eq:conditional_normal}
\end{equation}

\begin{theorem}[Parameter Relationships]
    For Equations \eqref{eq:conditional_prob} and \eqref{eq:conditional_normal} to be equivalent, the following system of equations must be satisfied:
    \begin{equation}
    \begin{cases} 
        \iota_n \sqrt{\bar{\alpha}_n} + \zeta_n = \sqrt{\bar{\alpha}_{n-1}}, \\
        \iota_n^2 (1 - \bar{\alpha}_n) + \sigma_n^2 = 1 - \bar{\alpha}_{n-1}, \\
        \sqrt{1 - \bar{\alpha}_{n-1}} \gamma_{n-1} = \iota_n \gamma_n \sqrt{1 - \bar{\alpha}_n}.
    \end{cases}
    \label{eq:parameter_system}
    \end{equation}
\end{theorem}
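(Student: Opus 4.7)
The plan is to match the mean and covariance of the two purported Gaussian expressions for \(q(z_{n-1} \mid z_n, z_0, z_c)\) after rewriting one of them so that both are expressed over the same source of randomness. Concretely, I would start from the sampling relation of Equation~\eqref{add}, namely
\begin{equation*}
z_n = \sqrt{\bar{\alpha}_n}\, z_0 + \sqrt{1 - \bar{\alpha}_n}\,\gamma_n\, \rho_{\text{res}} + \sqrt{1-\bar{\alpha}_n}\,\epsilon_n,
\end{equation*}
substitute this into the candidate form \(\iota_n z_n + \zeta_n z_0 + \sigma_n \epsilon\) from Equation~\eqref{eq:conditional_normal}, and then compare the resulting expression term-by-term against the target given by Equation~\eqref{eq:conditional_prob}.

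Carrying out the substitution, the candidate affine-plus-noise form becomes
\begin{equation*}
\bigl(\iota_n\sqrt{\bar{\alpha}_n} + \zeta_n\bigr) z_0 \;+\; \iota_n\sqrt{1-\bar{\alpha}_n}\,\gamma_n\, \rho_{\text{res}} \;+\; \iota_n\sqrt{1-\bar{\alpha}_n}\,\epsilon_n \;+\; \sigma_n \epsilon,
\end{equation*}
where \(\epsilon_n\) and \(\epsilon\) are independent standard Gaussians. Reading off the coefficient of \(z_0\) and matching it to \(\sqrt{\bar{\alpha}_{n-1}}\) in Equation~\eqref{eq:conditional_prob} yields the first equation of the system; matching the coefficient of \(\rho_{\text{res}}\) to \(\sqrt{1-\bar{\alpha}_{n-1}}\,\gamma_{n-1}\) yields the third; and using independence to add the variances of the two noise contributions and matching to \((1-\bar{\alpha}_{n-1})\mathbf{I}\) yields the second.

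The step that warrants the most care is the variance matching, because it relies on \(\epsilon_n\) (the forward-process noise embedded in \(z_n\)) being independent of the fresh noise \(\epsilon\) introduced in the candidate form; this is exactly the ingredient provided by Theorem~\ref{theorem:independence_main}, which guarantees that conditioning on \(z_{n-1}\) does not couple it to \(z_n\) beyond what is already captured by \((z_0, z_c)\). I would therefore explicitly invoke Theorem~\ref{theorem:independence_main} when decomposing the total noise so that the cross term vanishes and the variance decomposes cleanly as \(\iota_n^2(1-\bar{\alpha}_n) + \sigma_n^2\). Everything else reduces to identifying coefficients of three linearly independent ``directions'' (\(z_0\), \(\rho_{\text{res}}\), and isotropic noise), which is the standard device for inverting Gaussian conditionals and gives the three equations in~\eqref{eq:parameter_system} simultaneously, so no additional nontrivial computation is required.
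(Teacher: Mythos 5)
Your proposal is correct and follows essentially the same route as the paper: substitute the sampling relation for $z_n$ into the affine-plus-noise ansatz $\iota_n z_n + \zeta_n z_0 + \sigma_n\epsilon$, expand, and match the coefficients of $z_0$, of $\rho_{\text{res}}$, and of the noise variance against the target Gaussian of Equation~\eqref{eq:conditional_prob}. The only cosmetic difference is that you route the independence of $\epsilon_n$ and $\epsilon$ through Theorem~\ref{theorem:independence_main}, whereas the paper simply asserts the two noise terms are mutually independent standard Gaussians; both are acceptable, though strictly speaking that independence is a modeling assumption about the fresh noise $\epsilon$ rather than the content of the conditional-independence theorem itself.
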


\begin{proof}
    Equate the expressions for \( z_{n-1} \) from Equations \eqref{eq:conditional_prob} and \eqref{eq:conditional_normal}:
    \begin{align}
        z_{n-1} &= \iota_n z_n + \zeta_n z_0 + \sigma_n \epsilon, \\
        z_{n-1} &= \sqrt{\bar{\alpha}_{n-1}} z_0 + \sqrt{1 - \bar{\alpha}_{n-1}} \left( \gamma_{n-1} \rho_{\text{res}} + \epsilon_{n-1} \right).
    \end{align}
    
    Substitute the expression for \( z_n \) from Equation \eqref{eq:z_n_distribution} into the first equation:
    \begin{align}
        z_{n-1} &= \iota_n \left( \sqrt{\bar{\alpha}_n} z_0 + \sqrt{1 - \bar{\alpha}_n} \gamma_n \rho_{\text{res}} + \sqrt{1 - \bar{\alpha}_n} \epsilon_n \right) + \zeta_n z_0 + \sigma_n \epsilon \nonumber \\
        &= \left( \iota_n \sqrt{\bar{\alpha}_n} + \zeta_n \right) z_0 + \iota_n \sqrt{1 - \bar{\alpha}_n} \gamma_n \rho_{\text{res}} + \left( \sigma_n \epsilon + \iota_n \sqrt{1 - \bar{\alpha}_n} \epsilon_n \right).
        \label{eq:expanded_z_n-1}
    \end{align}
    
    Comparing coefficients with the second expression for \( z_{n-1} \), we obtain the system of equations in \eqref{eq:parameter_system}. The noise terms \( \epsilon \) and \( \epsilon_n \) are mutually independent and follow standard normal distributions, leading to the condition on the variances.
\end{proof}

\subsection{Sampling Method Parameter Settings}
\label{appx:sample_param}

The parameter \( \sigma_n \) varies depending on the chosen sample method. We present the parameter settings for two prominent sampling methods: \textbf{Deterministic Sampling} and \textbf{Stochastic Sampling}.

\subsubsection{Deterministic Sampling}

When employing the \textbf{Deterministic Sampling}, the noise parameter is set to zero:
\begin{equation}
    \sigma_n = 0.
\end{equation}

Substituting \( \sigma_n = 0 \) into the system of equations \eqref{eq:parameter_system}, we derive the specific parameter configurations for the sampler:
\begin{align}
    \iota_n &= \frac{\sqrt{1 - \bar{\alpha}_{n-1}}}{\sqrt{1 - \bar{\alpha}_n}}, \\
    \zeta_n &= \sqrt{\bar{\alpha}_{n-1}} - \sqrt{\bar{\alpha}_n} \cdot \frac{\sqrt{1 - \bar{\alpha}_{n-1}}}{\sqrt{1 - \bar{\alpha}_n}}, \\
    \gamma_n &= \gamma_{n-1}.
\end{align}

To ensure consistency with Equation \eqref{eq:z_n_distribution}, we set:
\[
\gamma_n = \frac{\sqrt{\bar{\alpha}_{N_r}}}{\sqrt{1 - \bar{\alpha}_{N_r}}}.
\]

\subsubsection{Stochastic Sampling}

For the Stochastic Sampling, the noise parameter is defined as:
\begin{equation}
    \sigma_n = \sqrt{ \frac{1 - \bar{\alpha}_{n-1}}{1 - \bar{\alpha}_n} } \cdot \sqrt{ 1 - \frac{\bar{\alpha}_n}{\bar{\alpha}_{n-1}} }.
\end{equation}

Substituting this \( \sigma_n \) into the system of equations \eqref{eq:parameter_system}, we obtain the parameter settings for the sampler:
\begin{align}
    \iota_n &= \frac{1 - \bar{\alpha}_{n-1}}{1 - \bar{\alpha}_n} \cdot \frac{\sqrt{\bar{\alpha}_n}}{\sqrt{\bar{\alpha}_{n-1}}}, \\
    \zeta_n &= \sqrt{\bar{\alpha}_{n-1}} - \sqrt{\bar{\alpha}_n} \cdot \left( \frac{1 - \bar{\alpha}_{n-1}}{1 - \bar{\alpha}_n} \cdot \frac{\sqrt{\bar{\alpha}_n}}{\sqrt{\bar{\alpha}_{n-1}}} \right), \\
    \gamma_n &= ({\frac{\sqrt{1 - \bar{\alpha}_n}}{\sqrt{\bar{\alpha}_n}}} / {\frac{\sqrt{1 - \bar{\alpha}_n - 1}}{\sqrt{\bar{\alpha}_n - 1}}}) \cdot \gamma_{n-1}.
\end{align}

To maintain consistency with Equation~\eqref{eq:z_n_distribution}, we set:
\[
\gamma_n = \frac{\sqrt{1 - \bar{\alpha}_n}}{\sqrt{\bar{\alpha}_n}} \cdot \left( \frac{\bar{\alpha}_{N_r}}{1 - \bar{\alpha}_{N_r}} \right).
\]

\subsection{Sampling}
\label{appx:sampling}
\begin{algorithm}[h]
    \caption{Compression-aware Diffusion (Sampling)}
    \label{alg:sampling}
    \begin{algorithmic}[1]
        \REQUIRE Diffusion model $\theta$, compressed feature $z_c$
        \STATE Compute \textbf{$z_{N_r} = \sqrt{\bar{\alpha}_{N_r}} z_c + \sqrt{1 - \bar{\alpha}_{N_r}} \epsilon_{N_r}$}
        \FOR{$n=N_r,\cdots,1$}
        \STATE Compute $\tilde{z}_0$ based on $n$:
        \IF{$n = {N_r}$}
        \STATE $\tilde{z}_0 \gets z_c$
        \ELSE
        \STATE $\tilde{z}_0 \gets \frac{\sqrt{\bar{\alpha}_n} z_n - \sqrt{1 - \bar{\alpha}_n} (\gamma_n z_c + \tilde{\epsilon}_n)}{\sqrt{\bar{\alpha}_n} - \sqrt{1 - \bar{\alpha}_n} \gamma_n}$
        \ENDIF
        \STATE Compute $\iota_n, \zeta_n, \sigma_n$ by 
        $\iota_n = \frac{\sqrt{1 - \alpha_n}}{\sqrt{\alpha_n}}, \zeta_n = \sqrt{\alpha_{n-1}} - \frac{\alpha_n}{\sqrt{\alpha_{n-1}}}, \sigma_n = \eta \sqrt{\frac{1 - \bar{\alpha}_{n-1}}{1 - \bar{\alpha}_n}} \sqrt{1 - \frac{\bar{\alpha}_n}{\bar{\alpha}_{n-1}}},$
        
        \STATE Compute \textbf{$z_{n-1} = \iota_n z_n + \zeta_n \tilde{z}_0 + \sigma_n \epsilon$} 
        \ENDFOR
        \STATE \textbf{return} $z_0$
    \end{algorithmic}
\end{algorithm}

\subsection{Training Objective}
\label{appx:training}

For training, based on Equation \eqref{eq:z_n_distribution}, we can express \( z_n \) as:
\begin{align}
    z_n &= \sqrt{\bar{\alpha}_n} z_0 + \sqrt{1 - \bar{\alpha}_n} \left( \gamma_n \rho_{\text{res}} + \epsilon_n \right) \nonumber \\
    &= \sqrt{\bar{\alpha}_n} z_0 + \sqrt{1 - \bar{\alpha}_n} \gamma_n \rho_{\text{res}} + \sqrt{1 - \bar{\alpha}_n} \epsilon_n \nonumber \\
    &= \sqrt{\bar{\alpha}_n} z_0 + \sqrt{1 - \bar{\alpha}_n} \gamma_n ( z_c - z_0 ) + \sqrt{1 - \bar{\alpha}_n} \epsilon_n \nonumber \\
    &= \left( \sqrt{\bar{\alpha}_n} - \sqrt{1 - \bar{\alpha}_n} \gamma_n \right) z_0 + \sqrt{1 - \bar{\alpha}_n} \gamma_n z_c + \sqrt{1 - \bar{\alpha}_n} \epsilon_n.
    \label{eq:expanded_z_n}
\end{align}

From Equation \eqref{eq:expanded_z_n}, we derive the expression for \( z_0 \) as follows:
\begin{equation}
    z_0 =
    \begin{cases}
        \dfrac{1}{\sqrt{\bar{\alpha}_n} - \sqrt{1 - \bar{\alpha}_n} \cdot \gamma_n} 
        \left( z_n - \sqrt{1 - \bar{\alpha}_n} \cdot (\gamma_n z_c + \epsilon_n) \right), & \text{if } n \neq N, \\
        z_c, & \text{if } n = N.
    \end{cases}
    \label{eq:recovery_z0}
\end{equation}

The optimization for visual adapter $\theta$ is achieved by minimizing the following negative ELBO, i.e., 
\begin{equation}
\sum_{n} D_{\text{KL}} \left[ q(z_{n-1} \mid z_n, z_0, z_c) \parallel p_\theta(z_{n-1} \mid z_n, z_c) \right],
\end{equation}
By combining this with Equation~\eqref{eq:conditional_normal}, we derive the visual loss function as follows:
\[
\mathcal{L}_{\text{Vis}} = \mathbb{E}_{z_0, c, z_c, t, \epsilon} \left\| \iota_n z_n + \zeta_n z_0 - (\iota_n z_n + \zeta_n \hat{z}_0) \right\|^2,
\]
Thus, for \( n \neq N \), the training loss \( \mathcal{L}_{\text{Vis}} \) can be reformulated as:
\begin{align}
    \mathcal{L}_{\text{Vis}} &= \zeta_n^2 * \mathbb{E}_{z_0, c, z_c, t, \epsilon} \| z_0 - \hat{z}_0 \|^2 \nonumber \\
    &= \zeta_n^2 * \mathbb{E}_{z_0, c, z_c, t, \epsilon} \left\| 
    \dfrac{1}{\sqrt{\bar{\alpha}_n} - \sqrt{1 - \bar{\alpha}_n} \cdot \gamma_n} 
    \left( z_n - \sqrt{1 - \bar{\alpha}_n} \cdot (\gamma_n z_c + \epsilon_n) \right) \right. \nonumber \\
    &\quad \left. - \dfrac{1}{\sqrt{\bar{\alpha}_n} - \sqrt{1 - \bar{\alpha}_n} \cdot \gamma_n} 
    \left( z_n - \sqrt{1 - \bar{\alpha}_n} \cdot (\gamma_n z_c + \epsilon_\theta(z_n, c, z_c, t)) \right) 
    \right\|^2 \nonumber \\
    &= \left( \dfrac{\zeta_n \sqrt{1 - \bar{\alpha}_n}}{\sqrt{\bar{\alpha}_n} - \sqrt{1 - \bar{\alpha}_n} \cdot \gamma_n} \right)^2 
    \mathbb{E}_{z_0, c, \hat{z}_c, t, \epsilon} \left\| \epsilon - \epsilon_\theta(z_n, c, z_c, t) \right\|^2.
    \label{eq:training_loss}
\end{align}

When \( n = N \), the training loss \( \mathcal{L}_{\text{Vis}} \) is set to zero. We define the coefficient \( \omega_n^2 = \left( \dfrac{\zeta_n \sqrt{1 - \bar{\alpha}_n}}{ \sqrt{\bar{\alpha}_n} - \sqrt{1 - \bar{\alpha}_n} \cdot \gamma_n} \right)^2 \). To enhance training stability, this coefficient \( \omega_n \) is omitted during training.

\subsection{Conclusion}

By formalizing the noise addition mechanism and deriving the requisite parameter relationships, we establish a robust mathematical foundation for our model. The distinct parameter settings for deterministic and stochastic samplers facilitate flexibility in sampling strategies while maintaining consistency with the underlying probabilistic framework. Furthermore, the training objective \( \mathcal{L}_{\text{Vis}} \) is meticulously formulated to minimize the discrepancy between the true noise \( \epsilon \) and the predicted noise \( \epsilon_\theta \), thereby ensuring effective model training.


\section{Experiment Details}

\subsection{Visual Adapter}
\label{app:visual adapter}
We create a trainable copy of the pretrained UNet encoder and middle block, denoted as $U_{\text{copy}}$. While ControlNet employs an additional convolutional neural network to map control images (e.g., Canny edges or depth maps) from the pixel domain to the latent space, our method eliminates the need for such an extra network due to the prior handling of latent features. Additionally, following ControlNet-XS~\cite{zavadski2025controlnet}, we design the copied encoder and middle block in accordance with the ControlNet-XS Type B architecture, as illustrated in Fig.~\ref{fig:vis_ada}.

\subsection{Perceptual Fidelity Optimizer} The Pfo follows the completion of semantic residual retrieval. During optimization, we use the AdamW optimizer~\citep{adawm} with a learning rate set to 0.3. We balance the time cost and reconstruction quality by using 500 steps for optimization, which takes approximately 175s for a Kodak 768x512 image. 
During the optimization process, the time steps $n$ are not randomly selected from all 0 to 1000 as in the forward process of the diffusion model. Instead, they are chosen based on our denoising steps. For instance, if we use 4-step DDIM sampling for denoising, our optimization process specifically targets the noise levels at these 4 steps. For more visual results, see Figure~\ref{fig:abl_pro}.
And, we perform inference every 50 optimization steps and select the result with the lowest LPIPS~\cite{LPIPS} to ensure the fidelity.



\subsection{Complexity Evaluation}
\label{appx:complexity}
In Table~\ref{tab:complexity_detail}, we provide the complexity results of each part. The Pfo module takes most of the encoding time for its iterative updating process. The total decoding speed remains competitive with existing methods.
As shown in Table~\ref{tab:complexity}, complexity comparison with existing methods are listed. These data were all tested on the Kodak dataset using an RTX 4090 GPU. 
 
\begin{figure}[h]
\centering
\includegraphics[width=0.48\textwidth]{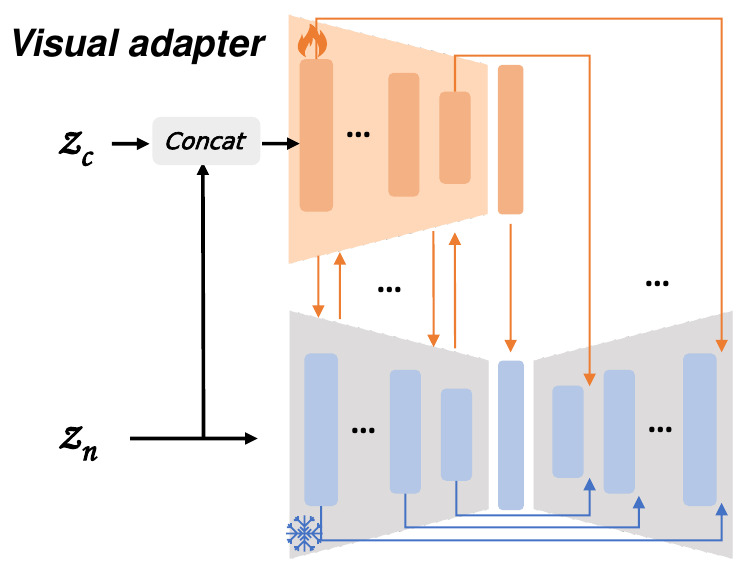}
\caption{The architecture of Visual Adapter. }
\label{fig:vis_ada}
\end{figure}

\begin{table*}[]
\centering
\caption{Detailed complexity of each component.}
\begin{tabular}{l|c|cc|c|c}
\toprule
\multirow{2}{*}{} & \multirow{2}{*}{\makecell{\textbf{Latent}}} & \multicolumn{2}{c|}{\textbf{Semantic Residual retrieval}} & \multirow{2}{*}{\makecell{\textbf{Diffusion}\\(4steps/3steps)}} & \multirow{2}{*}{\makecell{\textbf{Total}}}\\ 
 & & Srr (+Pfo) & Index Coding & \\ 
\midrule
Enc (s) & 0.10 & 5.77 (+175.29) & 0.0015  & - & 5.87 (+175.29) \\ 
Dec (s) & 0.06 & - & 0.0004 & 0.54/0.37 & 0.60/0.43\\ \bottomrule
\end{tabular}
\label{tab:complexity_detail}
\end{table*}

\begin{table*}[ht]
    \centering
    \caption{Complexity Comparison of different methods based on denoising steps, encoding speed, decoding speed. `w/o SRC' indicates removing both MLLM and Pfo. * indicates the extra time required for retrieving the full image caption.}
    \begin{tabular}{lcccc}
        \toprule
        \textbf{Type} & \textbf{Method} & \textbf{Denoising steps} & \textbf{Encoding Speed (s)} & \textbf{Decoding Speed (s)} \\
        \midrule
        \multirow{2}{*}{VAE-based method} & Cheng2020 & -- & 2.86 & 6.69 \\
        & ELIC & -- & 0.057 & 0.079 \\
        \cmidrule{2-5}
        \multirow{2}{*}{GAN-based method} & MS-ILLM & -- & 0.038 & 0.059 \\
        & HiFiC & -- & 0.036 & 0.061 \\
        \cmidrule{2-5}
        \multirow{3}{*}{Diffusion-based method} 
        & DiffEIC & 50 / 20 & 0.128 & 4.57/1.96  \\
        & PerCo & 20 / 5 & 0.08 (+ 0.32)* & 2.13/0.64  \\
        & Text-Sketch & 25 & 62.045 &  12.028 \\
        & Ours w/o SRC & 4 / 3 & 0.10 (+ 3.24)* & 0.60/0.43 \\
        & Ours & 4 / 3 & 181.16 & 0.60/0.43 \\
        \bottomrule
    \end{tabular}
\label{tab:complexity}
\end{table*}

\subsection{Other Implementation Details}
\label{appx:details}
\textit{1) Details of Baseline Model}: To ensure a fair comparison, we evaluated all methods as follows: For open-sourced approaches (e.g., DiffEIC, PerCo, CDC, MS-ILLM, Text-Sketch), we used their publicly available pretrained models. For non-open-sourced methods, we relied on the official results reported in their respective publications (e.g GLC, CorrDiff). Additionally, since the open-sourced HiFiC model operates at a higher bitrate, our data was sourced from the reproduced version by the DiffEIC's author. For PerCo, the performance of the open-sourced version is slightly different with the offical results, so we provided comparison with both version.

\textit{2) Training}: The Feature Compressor and the Visual Adapter models which is based on the stable diffusion v2.1 are trained on the LSDIR~\cite{LSDIR} and Flicker2w~\cite{flicker} dataset. We first preprocess this dataset using LLaVA~\citep{llava} to obtain a caption corresponding to each image. Note that this training process is divided into two stages. In the first stage, we set \(\lambda_{d}\) and \(\lambda_{p}\) to 0, and \(\lambda_{R}\) to \{24, 14, 4, 2, 1\}, training for 150K iterations. In the second stage, we set \(\lambda_{d}\) to 1, \(\lambda_{p}\) to \{0.4, 0.6, 0.8, 1, 1\}, and \(\lambda_{R}\) to \{36, 16, 6, 3, 1.5\}, training for 100K iterations. During training, we center-crop images to a dimension of \(512 \times 512\) and randomly set 30\% of the captions to empty strings to enhance the model's generative capabilities and sensitivity to prompts. 

As shown in Eq.~\eqref{training}, the term $\mathcal{L}_{R} = R(z_c)$ is estimated by a probability estimation model with $p_{\hat{y}}$ during training, which is formulated as  \(R = \mathbb{E}[-\log_{2}(p_{\hat{y}})]\).
Here, $\hat{y}$ represents the output obtained by passing $z_0$ through the feature encoder followed by quantization.

\textit{3) Semantic Residual Retrieval.} After the training of the Latent Compressor is finished. The decoded image $x'$ and its corresponding original image $x$ are used to extract the semantic residual. 

The prompt for GPT4o to extract the raw captions is:  
\begin{itemize}
    \item \textbf{\textit{``Please describe this picture in detail with 40 words. Do not provide any description about feelings."}}

\end{itemize}
Then we use the captured $\textbf{f}_\textit{mllm}(x)$ and $\textbf{f}_\textit{mllm}(x')$ to further
 capture the residual information $c_\textbf{res}$. The prompt used for GPT-4o is:

\begin{itemize}
    \item \textbf{\textit{``Original Image: `$\textbf{f}_\textit{mllm}(x)$'; 
Compressed Image: `$\textbf{f}_\textit{mllm}(x')$'. 
Provide information that is in the original image but not included in or mismatch with the compressed image. Don't include information that is already in the compressed image. Please use most compact words. Do not include the description for the compressed image. For example: if input is
Original Image: A red barn surrounded by trees, reflected in a pond.
Compressed Image: red house surrounded by trees.
Residual caption is : A barn reflected in a pond. Please refer to this to output. Do not appear words like 'compressed image', 'original image' and 'The semantic residual is'. If you think that the two descriptions mean almost the same thing, please output an empty string. "}}
\end{itemize}

\section{More Results}
\label{appx:results}

\begin{figure*}[ht]
\centering
\includegraphics[width=0.9\textwidth]{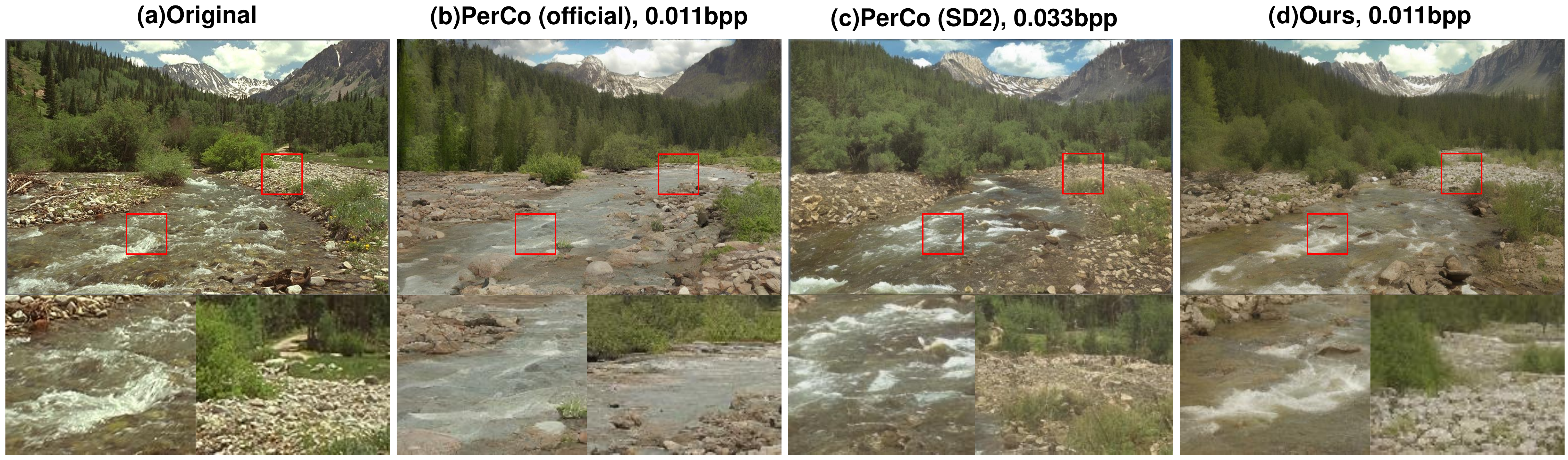}
\caption{Visual comparison with the official PerCo version.}
\label{fig:perco_offical} 
\end{figure*}

\begin{table}[h!]
\centering
\caption{Comparison with results in PerCo's original paper in terms of BD-Rate. Note that the PerCo (official) is also LPIPS optimized. Our method still shows significantly better performance.}
\begin{tabular}{lccc}
\toprule
\multirow{2}{*}{\textbf{Methods}} & \multicolumn{3}{c}{\textbf{BD-Rate (\%) / Kodak }} \\
\cmidrule(lr){2-4}
 & \textbf{LPIPS} & \textbf{PSNR} & \textbf{MS-SSIM} \\
\midrule
PerCo(official) & 0 & 0 & 0 \\
PerCo(SD2) & 21.7 & 88.3 & 15.4 \\
Ours w/o Pfo & -34.5 & -54.7 & -33.4 \\
Ours & -41.5 & -52.0 & -32.7 \\
\bottomrule
\end{tabular}
\label{tab:abl_bdrate}
\end{table}









In Figure~\ref{fig:perco_offical}, the reproduced version of PerCO (marked as PerCo(SD2)) provides better visual quality than the original one. However, in terms of objective metrics in Table~\ref{tab:abl_bdrate}, the official PerCo version performs better. Therefore, we provide comparisons with both versions to demonstrate our method’s efficiency.

In Figure~\ref{fig:abl_sr}, we also provide more examples of semantic residual by MLLM. 
In Figure~\ref{fig:abl_prometric}, While each metric improves with its respective optimization, PSNR-optimized results tend to produce smoother textures, such as in water and trees, whereas LPIPS and MS-SSIM optimizations are better at preserving details.

In Figure \ref{fig:low}, \ref{fig:high}, and \ref{fig:ultra_low}, we present visualizations for different bpp values. It can be observed that our method achieves significantly impressive subjective quality.


\begin{figure*}[ht]
\centering
\includegraphics[width=0.9\textwidth]{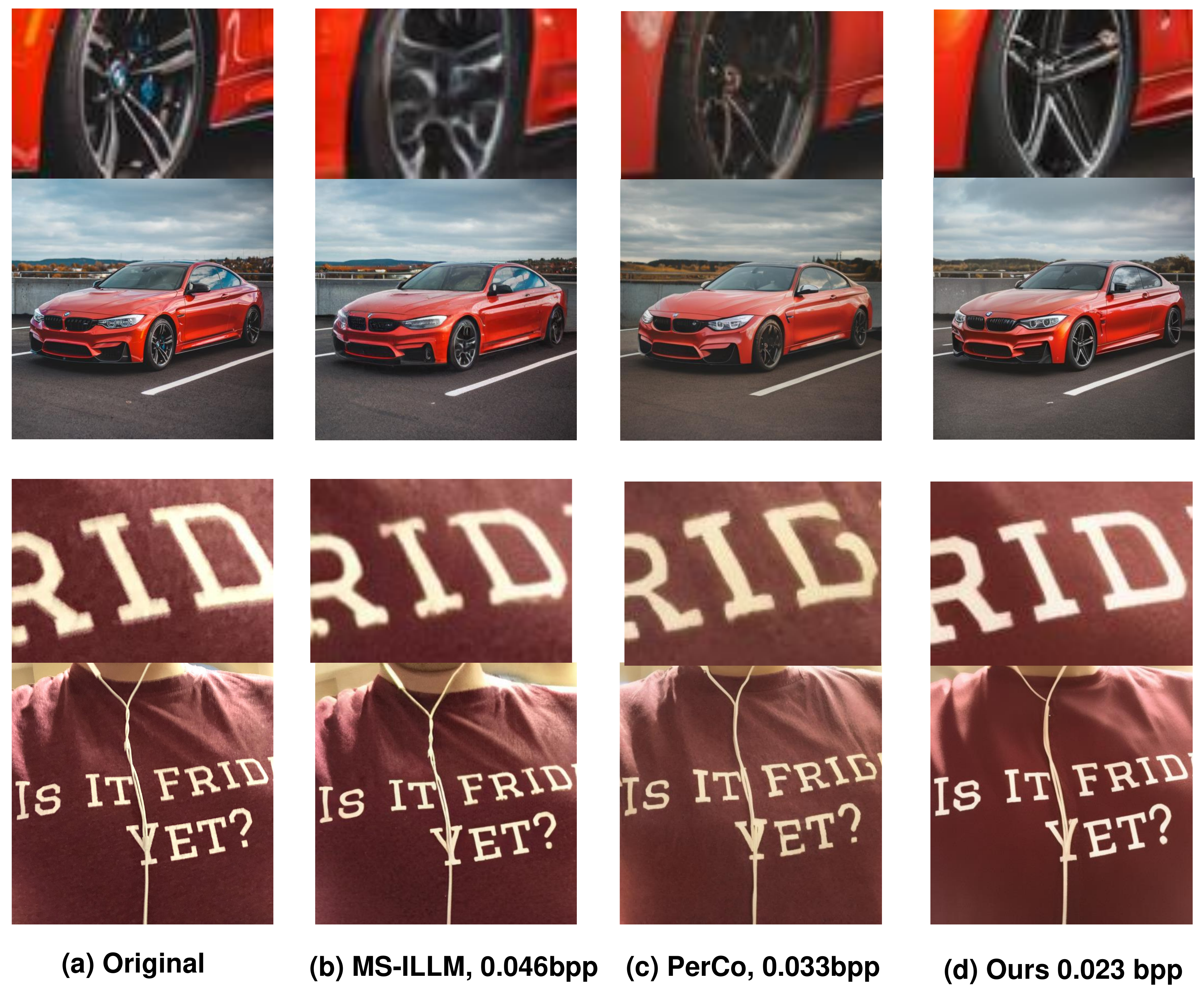}
\caption{Visual comparison at low bpp.}
\label{fig:low} 
\end{figure*}

\begin{figure*}[ht]
\centering
\includegraphics[width=0.9\textwidth]{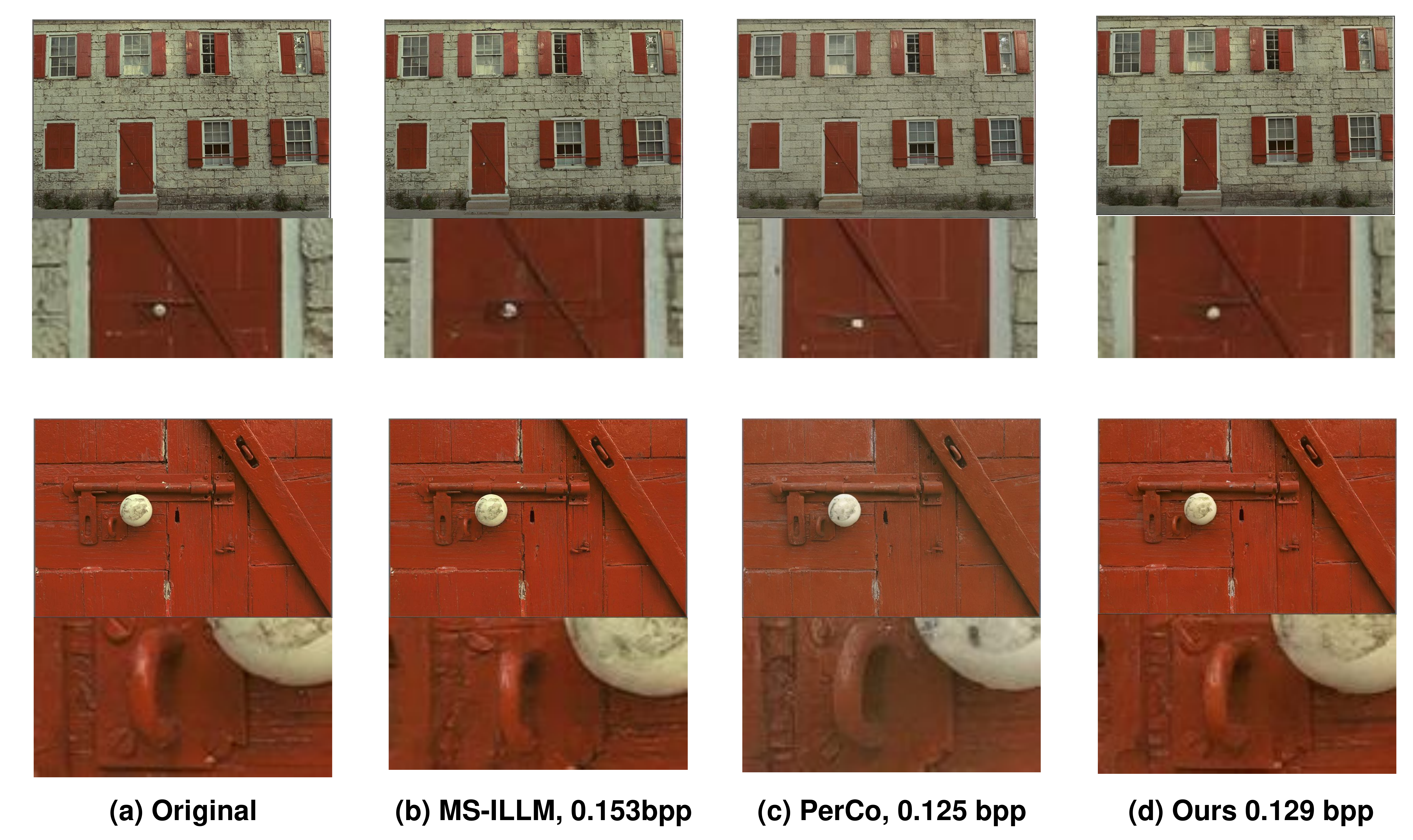}
\caption{Visual comparison at high bpp.}
\label{fig:high} 
\end{figure*}

\begin{figure*}[h]
\centering
\includegraphics[width=0.9\textwidth]{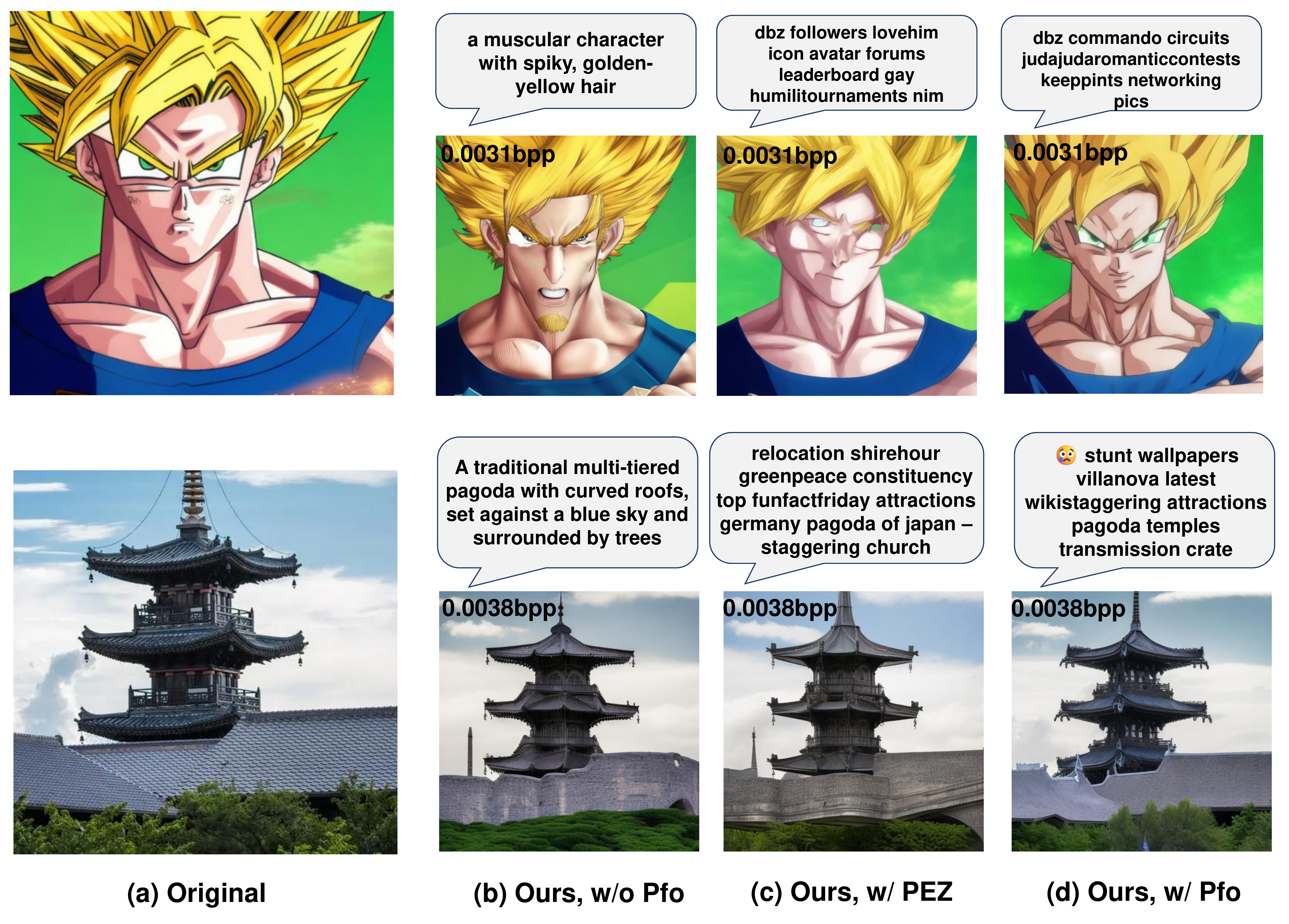}
\caption{Effectiveness of Pfo optimization and corresponding texts}
\label{fig:abl_pro} 
\end{figure*}

\begin{figure*}[htbp]\scriptsize
\centering
\includegraphics[width=0.9\textwidth]{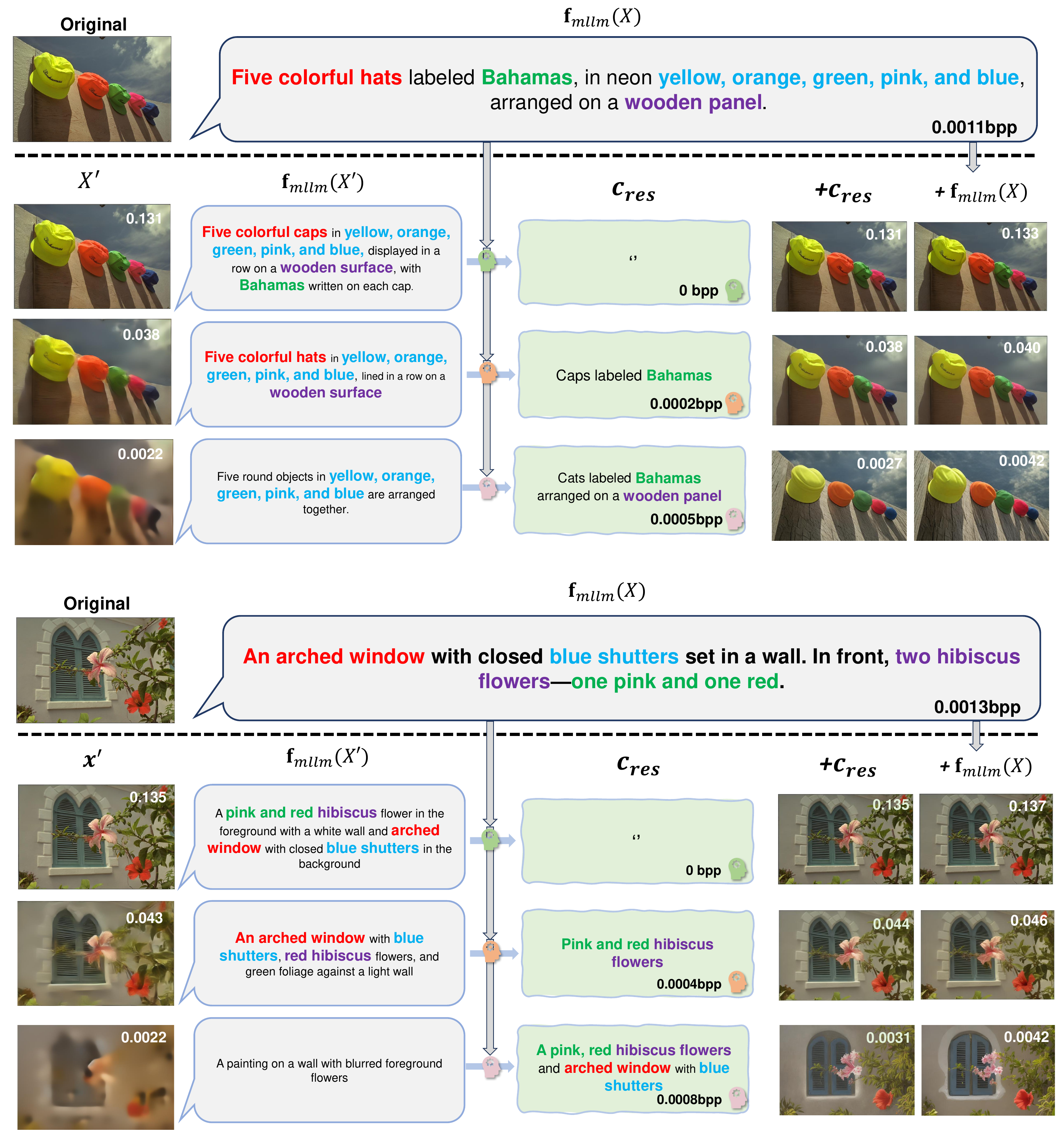}
\caption{More visualization examples of MLLM-based Semantic Residual Retrieval.}
\label{fig:abl_sr}
\end{figure*}

\begin{figure*}[htbp]\scriptsize
\centering
\includegraphics[width=0.78\textwidth]{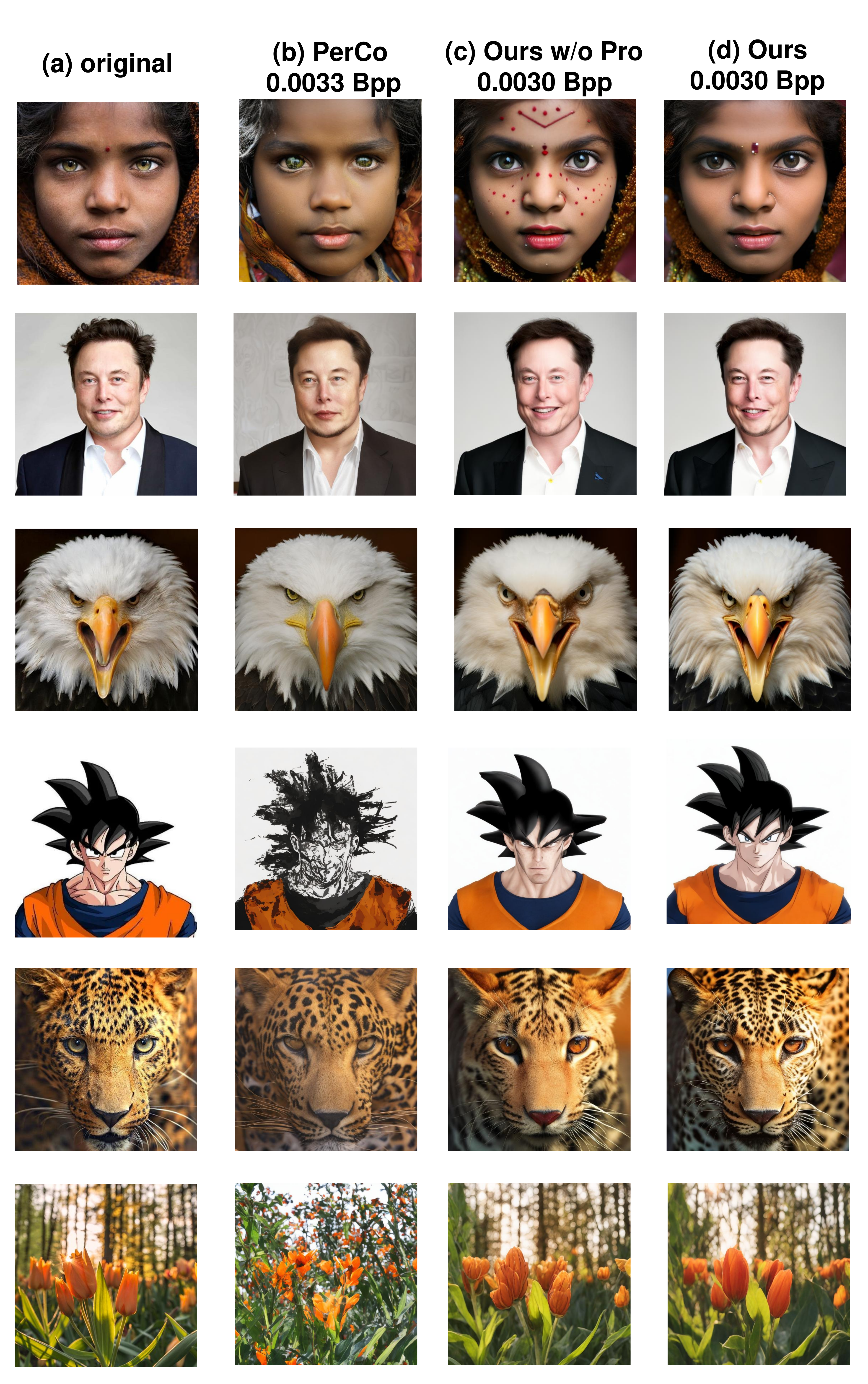}
\caption{More subjective visual comparisons}
\label{fig:ultra_low}
\end{figure*}




\end{document}